\theoremstyle{definition}
\newtheorem{assumption}{Assumption}
\newtheorem{definition}{Definition}
\newtheorem{theorem}[definition]{Theorem}
\newtheorem{lemma}[definition]{Lemma}
\newtheorem{proposition}[definition]{Proposition}
\newtheorem{corollary}[definition]{Corollary}
\newtheorem{example}[definition]{Example}
\renewenvironment{proof}{\hfill\break{\bfseries Proof. }}{\hfill\qed\break}
\renewcommand{\labelenumi}{\roman{enumi}.}
\let\@fnsymbol\@arabic
\title{Conditions on Preference Relations that Guarantee the Existence of Optimal Policies}
\author{Jonathan Cola\c{c}o Carr\footnote{McGill University}\;$^,$\footnotemark[2]\and Prakash Panangaden\footnotemark[1]\;$^,$\footnote{Mila - Quebec AI Institute} \and Doina Precup\footnotemark[1]\;$^,$\footnotemark[2]}
\date{}
\begin{document}
\maketitle

\begin{abstract}
Learning from Preferential Feedback (LfPF) plays an essential role in training  Large Language Models, as well as certain types of interactive learning agents. However, a substantial gap exists between the
theory and application of LfPF algorithms.
Current results guaranteeing the
existence of optimal policies in LfPF problems assume that both the preferences and transition
dynamics are determined by a Markov Decision Process. We introduce the Direct Preference Process, a new framework for analyzing
LfPF problems in partially-observable, non-Markovian environments. Within
this framework, we establish conditions that guarantee the
existence of optimal policies by considering the ordinal structure of the
preferences. We show that a decision-making problem can have optimal policies -- that are characterized by recursive optimality equations -- even when no reward function can express the learning goal. These findings underline the need  to explore preference-based learning strategies which do not assume that preferences are generated by reward.
\end{abstract}

\section{Introduction}

Learning from Preferential Feedback (LfPF) is an important part of many real-world
applications of artificial intelligence (AI). At a high level, it describes an
interactive learning problem in which an agent's objectives are determined by
a collection of relative preferences over outcomes. 
LfPF has been used for a wide range of tasks, from robotics~\citep{Christiano2017_RLHF,Lee2021_Pebble} to the fine-tuning of Large Language Models~\citep{Bai2022,Openai2023gpt4,Rafailov2023direct,Stiennon_2020learning}.

However, the current theory of
LfPF lags far behind the success it has demonstrated in applications. There are no performance guarantees for LfPF problems beyond those defined through fully
observable, Markovian environments. Moreover, current
results~\citep{Chatterji_Neurips2021,Kong2022_provably,Saha2023_dueling,Xu2020_pbrl,Zhu2023_principled_rl} 
assume that the preferences in an LfPF problem are generated by an underlying reward
function. This assumption is known to be both unrealistic~\citep{Bobu2020_quantifying,Pandey2022,Tversky1974} and hard to
verify~\citep{Casper2023open}.  As a consequence, there are no 
theoretical guarantees for LfPF methods that are used in real-world scenarios.
\begin{figure}[htpb]
    \centering
    \includegraphics[width=0.5\textwidth]{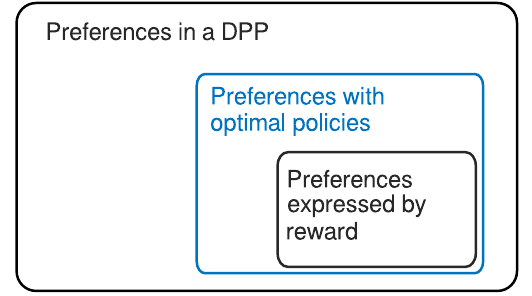}
    \caption{The Direct Preference Process (DPP) is a new framework for sequential decision-making from preferential feedback. The objectives in a DPP are given by a set of relative preferences over outcomes rather than a reward function. In Section~\ref{sec:optimal-policies} we show that it is possible for preferences in a DPP to have optimal policies even when no reward function can express the learning goal.}
    \label{fig:summary}
\end{figure}

In this paper, we define the Direct Preference Process, a 
model of preference-based learning in partially-observable, non-Markovian
environments. A key
feature of the Direct
Preference Process is that abstracts away the details of \textit{how} feedback is
given to a learning agent, instead working ``directly'' with the ordinal 
structure inferred from the preferences. This abstraction is particularly well suited for
LfPF problems, where a variety of feedback mechanisms are used during
training, including offline reward modelling~\citep{Ziegler2020finetuning,Bai2022}, once-per-episode trajectory
feedback~\citep{Chatterji_Neurips2021} and online feedback between trajectory segments~\citep{Christiano2017_RLHF}. 

\textbf{Our Contributions} 
\renewcommand{\labelenumi}{\roman{enumi}.}
\begin{itemize}
    \item We define the Direct Preference Process, a model of
        preference-based learning in partially-observable,
        non-Markovian environments (Section~\ref{sec:dpp}). We provide necessary and sufficient conditions
        that determine when a Direct Preference Process can be cast as an instance
        of reinforcement learning (RL).
    \item We show that optimal policies exist in a Direct Preference Process even when the preferences cannot be expressed by reward, and generalize the Bellman
    Optimality Equations to a larger class of order relations (Section~\ref{sec:optimal-policies}). In doing so, we highlight the properties of reward-based objectives that are \textit{not necessary} in order for optimal policies to exist.
    \item We derive conditions that determine when a
        computationally-constrained agent is able to behave optimally (Section~\ref{sec:bounded-optimal-policies}).
\end{itemize}
We focus on conditions that guarantee the existence of optimal
policies in order to determine when a LfPF problem has well-defined solutions. Our work opens up interesting areas of future research, both for the theory and practice of
preference-based learning.

\section{Related Work}
\label{sec:related-work}
In this section we review relevant sub-fields of LfPF.

\textbf{Preference-based RL.} Preference-based Reinforcement
Learning (PbRL)~\citep{Abdelkareem2022_PbRL_Review,Wirth2017_pbrl} describes a
collection of RL techniques used to solve sequential decision-making problems
whose objectives are determined by a set of relative preferences. This sub-field of LfPF includes RL from Human
Feedback~\citep{Christiano2017_RLHF,Ziegler2020finetuning} and RL from AI
Feedback~\citep{Bai2022}, both of which are popular methods of
fine-tuning Large Language Models. To the best of our
knowledge, the current results that guarantee the existence of optimal policies in
PbRL~\citep{Chatterji_Neurips2021,Kong2022_provably,Saha2023_dueling,Xu2020_pbrl,Zhu2023_principled_rl} rely on the assumption that there is an underlying controlled
Markov process and reward function which describe the transition dynamics and
preferences of the PbRL problem. We will relax both of these assumptions in
this paper, and instead analyze preference-based learning problems in terms of the ordinal structure
of the preferences.


\textbf{Ordinal Dynamic Programs.} Our work is reminiscent of ordinal dynamic
programs~\citep{Mitten1974,Sobel1975_odp,Weng2011Markov}. Our model is most similar to Mitten's Preference Order Dynamic
Program~\citep{Mitten1974}, which searched for conditions on the ordinal
structure of the objectives that could guarantee the existence of an optimal policy.
While Mitten assumed access to a set of preferences between ``intermediate policies'' for each state, we assume that the objectives
are given by a \textit{single} set of preferences between distributions over
trajectories, which seems like a more reasonable assumption given that feedback is
typically collected over trajectories or trajectory segments.

Our analysis significantly extends that
of Mitten and other prior ordinal dynamic programs. First, we abandon the
Markov assumption and consider problems that occur in partially observable,
non-Markovian environments. Second, we highlight two
structural properties (convexity and interpolation) as well as two concrete
examples (Examples~\ref{ex:consistent} and~\ref{ex:nointerpolation}) of goals that lead to the existence of
optimal policies in the absence of expected reward. These properties and examples
are essential to our theory, since they determine when it is impossible for ordinal
decision problems to be reconsidered as instances of reinforcement learning. To the best of our knowledge,
the connections between preference relations and expected reward in RL have only
recently started to be considered~\citep{Bowling2023settling,Shakerinava2022utility,Pitis2022rational}. Lastly, we provide additional
conditions which determine when it is possible for a computationally constrained
agent to behave optimally--an essential result for practical applications which has
not been studied in prior work.

\section{Background}
\label{sec:background}

Given a finite set $X$, let $\text{Dist}(X)$ be the set of probability distributions over
$X$. For distributions $A$ and $B$ over $X$ and non-negative number $\alpha$ less than or equal to one, the distribution $\alpha A + (1-\alpha)B$
assigns the probability of an element $x\in X$ as $\alpha A(x) + (1-\alpha)B(x)$.

We interpret a binary relation $\preceq$ on
$\text{Dist}(X)$ as a set of relative preferences, so that for any two
distributions $A$ and $B$ over $X$, the statement ``$A \preceq B$''
means that $B$ is at least as desirable as $A$. Outcome $B$ is ``strictly
preferred'' to $A$, written $A \prec B$, if $A\preceq B$ and $\lnot(B \preceq
A)$. Outcomes $A$ and $B$ are ``$\preceq$-equivalent'', written $A\sim B$, if both $A \preceq B$
and $B \preceq A$. 
\begin{definition} A binary relation $\preceq$ on $\text{Dist}(X)$ is a \textbf{preorder} if it satisfies both of the
    following properties: 
    \begin{itemize}
        \item \textit{(reflexivity)} for any distribution $A$ over $X$, $A
            \preceq A$. 
        \item \textit{(transitivity)} for any three distributions $A$,$B$ and
            $C$ over $X$, if $A \preceq B$ and $B \preceq C$ then $A \preceq
            C$.
    \end{itemize}
    A preorder is \textit{total} if for any two distributions $A$, $B$ over $X$,
    either  $A \preceq B$ or $B \preceq A$.
\end{definition}

\subsection{Agents and Environments}
To describe interactions between a learning agent and its environment, we consider a finite
version the agent-environment interface~\citep{Abel2023_definition}.
This framework draws from related models of 
partially-observable, non-Markovian learning
problems~\citep{Dong2021_simple_agent,Lu2023_BitbyBit,Hutter2016_esa}. 
\begin{definition}
    An \textbf{agent-environment interface} $(\mathcal{O},\mathcal{A},T)$ consists of
    a finite set of observations $\mathcal{O}$, a finite set of actions
    $\mathcal{A}$ and a time horizon $T\in \mathbb{N}$.
\end{definition}
The finite horizon assumption is motivated by the fact that in practice, human
labelers rank trajectories, so only a finite number of time steps is available 
to train. However, we impose no restrictions on the transition dynamics, so 
there may be an arbitrary (but finite) number of training episodes. To avoid trivialities we  assume that both the action and observation
sets are non-empty. Extensions to infinite action and observation sets is left as an important area for future work. 

For an agent-environment interface $(\mathcal{O},\mathcal{A},T)$, a set
of \textit{$t$-histories} is defined for each non-negative integer $t$ less than or equal to $T$ as follows: $\mathcal{H}_0= \mathcal{O}$ and $\mathcal{H}_{t+1}= \mathcal{H}_t \times
(\mathcal{A}\times \mathcal{O})$. We define $\mathcal{H}$ as the set of all histories,
\begin{equation}
\mathcal{H}= \bigcup_{t=0}^{T}\mathcal{H}_t 
.\end{equation} 
We will refer to histories of length $T$  as
\textit{trajectories} and write $\Omega$ instead of $\mathcal{H}_T$.
For each non-negative integer $t$ less than or equal to $T$, the projection
$\xi_{0:t}:\Omega\to \mathcal{H}_t$ maps each trajectory to its first sub-history of
length $t$. The
environment determines which histories are attainable in a given learning problem. 
\begin{definition}
    An \textbf{environment} with respect to the interface $(\mathcal{O},\mathcal{A},T)$ is a
    tuple $e=(\rho_0,\rho)$ consisting of an initial distribution over observations
    $\rho_0\in \text{Dist}(\mathcal{O})$ and a
    transition probability function $\rho: (\bigcup_{t=0}^{T-1}\mathcal{H}_t)\times
    \mathcal{A}\to \text{Dist}(\mathcal{O})$. 
\end{definition}
Notice that the transition dynamics in a learning environment may depend on the entire
history, which may be the case in practice.

\begin{example}[Generative Language Models 1]\label{ex:llm-1}
An agent-environment
interface $(\mathcal{O}, \mathcal{A},T)$ can describe the interactions between
a language model and a user, where the set of observations consists of the possible
messages the user sends to the language model and the set of actions consists of the possible
messages the model is able to send to the user. The environment $e$
models the user's question patterns and prompts, which may depend on the full
conversation history.
\end{example}
The behaviour of an agent is defined by its policy.
\begin{definition}
    A \textbf{policy} $\pi$ with respect to interface $(\mathcal{O},\mathcal{A},T)$ is a function
    $\pi: \mathcal{H}\to \text{Dist}(\mathcal{A})$.
\end{definition}
It is important to allow policies to depend on the full history in Section~\ref{sec:optimal-policies} because we seek optimality conditions that do not depend on what information is available to the agent.
We address agents with memory constraints in Section~\ref{sec:bounded-optimal-policies}, where the decision problem becomes partially-observable and non-Markovian. This is typical when function approximation is used.

\textbf{Important Distributions.} Agents will be 
evaluated according to the distributions
their policies induce over $\Omega$. For each policy $\pi$ and history
    $h_t$, we define $D^{\pi}(h_t)$ as the distribution over $\Omega$ induced by
    starting from history $h_t$ and following $\pi$ in environment $e$ thereafter.
    More precisely, for each trajectory $h_T$, $D^{\pi}(h_T)$ is equal to the Dirac
    distribution concentrated at $h_T$ and for each history $h_t$ of length less
    than $T$,
\begin{equation}
    D^{\pi}(h_t) = 
    \sum_{a\in \mathcal{A}}\pi(a \vert h_t)\sum_{o\in \mathcal{O}}\rho(o \vert
    h_t,a) D^{\pi}(h_t\cdot (a,o)),
\end{equation}
where $h_t\cdot (a,o)$ is the history of length $t+1$ obtained by appending the action-observation pair
$(a,o)$ to $h_t$. Similarly, we define $D^{\pi}(h_t\cdot a)$ as the distribution over $\Omega$ induced
by starting from
history $h_t$, selecting action $a$ and following $\pi$ thereafter. Note that we are
overloading notation here; 
$D^{\pi}$ may take either a history or a history appended with an action as 
its argument. 
The distributions $D^{\pi}(h_t)$, $D^{\pi}(h_t\cdot a)$ and $D^{\pi}(h_t\cdot
(a,o))$ are related as follows:
\begin{subequations}
    \label{eq:dpi}
\begin{align}
    D^{\pi}(h_t\cdot a)&= \sum_{o\in \mathcal{O}} \rho(o \vert h_t,a)
    D^{\pi}(h_t\cdot (a,o))\\
    D^{\pi}(h_t) &= \sum_{a\in \mathcal{A}} \pi(a \vert h_t) D^{\pi}(h_t\cdot a)
.\end{align}
\end{subequations}
\textbf{Attainable Histories.} As
in \citet{Abel2023_definition}, we will only consider the performance of
policies in histories that occur can with non-zero probability in a given environment
$e$ under some policy. For each non-negative integer $t$ less than or equal to $T$, the \textit{set of attainable
$t$-histories in $e$}, denoted by $\mathcal{H}_t^{e}$, is defined recursively as
follows:
$\mathcal{H}_0^{e}$ is equal to the support of $\rho_0$ and 
\begin{equation}
\label{eq:realiz}
\mathcal{H}_{t+1}^{e} = \{h_{t}\cdot (a,o)\in \mathcal{H}_{t+1}:\; h_{t}\in
\mathcal{H}_{t}^{e} \text{ and }\rho(o \vert h_t,a)>0\}
.\end{equation}
We define $\mathcal{H}^{e}= \bigcup_{t=0}^T \mathcal{H}_t^{e}$ as
the \textit{set of attainable histories in $e$} and $\Omega^{e}=
\mathcal{H}_T^e$ as the \textit{set of attainable trajectories in $e$}.

\section{The Direct Preference Process}
\label{sec:dpp}
An agent-environment interface, an environment and a binary relation on
the set of distributions over trajectories define a Direct Preference
Process. 
\begin{definition}\label{defn:dpp}
    A \textbf{Direct Preference Process} $(\mathcal{O}, \mathcal{A}, T,e, \preceq)$ 
    consists of an agent-environment interface $(\mathcal{O}, \mathcal{A}, T)$, an
    environment $e$ and a binary
    relation $\preceq$ on the set of distributions over $\Omega$.
\end{definition}
The distinctive feature of the Direct Preference Process
is that the preference relation $\preceq$ defines the goals of
a learning problem. Importantly, we do not assume that these objectives have any quantitative structure. However, when a numerical objective function does
convey the goals of a decision problem, there is an implicit Direct Preference Process. 
\begin{example}[Generative Language Models 2]\label{ex:llm-2}
  Given the interface $(\mathcal{O},\mathcal{A}, T)$ and environment $e$ from Example~\ref{ex:llm-1}, the goal of the language model may be to maximize a performance
    function $\varphi: \text{Dist}(\Omega)\to \mathbb{R}$. This induces a preference
    relation $\preceq_{\varphi}$ on $\text{Dist}(\Omega)$, defined for each pair of distributions $A$ and $B$ over $\Omega$ as:
    \[
        A \preceq_{\varphi} B \iff \varphi(A)\le \varphi(B)
    .\] 
    The Direct Preference Process $(\mathcal{O}, \mathcal{A}, T, e,\preceq_{\varphi})$ underlies this decision problem.
\end{example}

A policy $\pi$ is optimal in a Direct Preference Process if it achieves
the most desirable outcome in every attainable start history.
\begin{definition}
    \label{defn:opt}
    Given a Direct Preference Process $(\mathcal{O}, \mathcal{A}, T,e, \preceq)$, a
    policy $\pi$ is \textbf{$\preceq$-optimal} (or simply \textbf{optimal}) if for every attainable history
    $h_t$ and policy $\pi'$, $D^{\pi'}(h_t) \preceq D^{\pi}(h_t)$. 
\end{definition}
In Example~\ref{ex:llm-2}, the language model's policy is optimal for a given user if it achieves the best performance in every attainable conversation history.

\subsection{Reward-Based Objectives}
\label{sec:dpp-rl}
As noted in Section~\ref{sec:related-work}, the current analyses of PbRL problems
assume that preferences are derived from an underlying reward function. While
ordinal dynamic programs do not make this assumption outright, it is unclear 
whether or not an underlying reward is implied by the assumptions made about the preferences. In
contrast to both of these models, the Direct Preference Process comes with necessary and sufficient conditions that determine when goals can
be expressed by the expected cumulative sum of numerical rewards.
\begin{definition}\label{defn:erc}
Let $(\mathcal{O},\mathcal{A},T, e, \preceq)$ be a Direct Preference Process. We say
that $\preceq$ is expressed by the \textbf{expected reward criterion} if there is a
function $r: \mathcal{H}\to \mathbb{R}$ such that for any two distributions $A$ and
$B$ over $\Omega$,
\begin{equation}
    \label{eq:erc}
  A \preceq B \iff
    \mathbb{E}_{A}\left[ \sum_{t=0}^{T} r(H_t) \right]\le \mathbb{E}_{B}\left[
    \sum_{t=0}^{T}r(H_t) \right]
.\end{equation} 
We say that $r$ \textbf{expresses} $\preceq$ if~(\ref{eq:erc})
holds for any two distributions $A$ and $B$ over $\Omega$. 
\end{definition}
As an important sanity check, Theorem~\ref{thm:rl-opt} confirms that when goals are expressed by the expected reward criterion, the previous definition
of an optimal policy  can be re-stated in terms
the value function criterion found in the RL
literature~\citep{Sutton2018_rl_book,Puterman_MDPs}. For a reward function $r: \mathcal{H}\to \mathbb{R}$, we define the \textit{$r$-value} of a
policy $\pi$ in history $h_t$ as 
\begin{equation}
    \label{eq:value}
    V_{\pi}(h_t;r)= \mathbb{E}_{\pi}\left[ \sum_{s=t}^{T} r(H_s) \vert H_t=h_t \right]
,\end{equation} 
where the conditional expectation is taken with respect to $D^{\pi}(h_t)$.
\begin{theorem}
    \label{thm:rl-opt}
    Let $(\mathcal{O},\mathcal{A},T,e, \preceq)$ be a Direct Preference Process and
    suppose that a reward function $r:\mathcal{H}\to \mathbb{R}$ expresses
    $\preceq$. A policy $\pi$ is $\preceq$-optimal if and only if for each attainable history $h_t$, \[
        V_{\pi}(h_t; r)= \sup_{\pi'}V_{\pi'}(h_t; r)
    .\] 
\end{theorem}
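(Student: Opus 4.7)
The plan is to prove both directions by reducing the preference comparison $D^{\pi'}(h_t)\preceq D^{\pi}(h_t)$ to the scalar comparison $V_{\pi'}(h_t;r)\le V_{\pi}(h_t;r)$, and then reading off each direction from the definitions.

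The key observation is a \emph{prefix cancellation}: both $D^{\pi}(h_t)$ and $D^{\pi'}(h_t)$ are supported on trajectories whose first $t{+}1$ partial histories coincide with the prefixes of $h_t$. Writing $h_{0:s}=\xi_{0:s}(h_t)$ for $s\le t$, the random variables $H_0,\dots,H_t$ are therefore $D^{\pi}(h_t)$-almost surely (and $D^{\pi'}(h_t)$-almost surely) equal to the deterministic values $h_{0:0},\dots,h_{0:t}$. Splitting the sum in Definition~\ref{defn:erc} at index $t$ gives
\begin{equation*}
\mathbb{E}_{D^{\pi}(h_t)}\!\left[\sum_{s=0}^{T}r(H_s)\right]
= \underbrace{\sum_{s=0}^{t-1}r(h_{0:s})}_{=:C(h_t)} + V_{\pi}(h_t;r),
\end{equation*}
and the same identity holds with $\pi$ replaced by $\pi'$, with the \emph{same} constant $C(h_t)$. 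First I would establish this identity as a short lemma inside the proof. Combined with the defining property of $r$ in~(\ref{eq:erc}), it yields the equivalence
\begin{equation*}
D^{\pi'}(h_t)\preceq D^{\pi}(h_t) \iff V_{\pi'}(h_t;r)\le V_{\pi}(h_t;r).
\end{equation*}

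Once this equivalence is in hand, both directions of the theorem are immediate. For ($\Rightarrow$), assume $\pi$ is $\preceq$-optimal; then for every attainable $h_t$ and every policy $\pi'$ we have $V_{\pi'}(h_t;r)\le V_{\pi}(h_t;r)$, so taking the supremum over $\pi'$ (and noting that $\pi$ is one of the candidates) gives $V_{\pi}(h_t;r)=\sup_{\pi'}V_{\pi'}(h_t;r)$. For ($\Leftarrow$), assume the value-function condition; then $V_{\pi'}(h_t;r)\le V_{\pi}(h_t;r)$ for every attainable $h_t$ and every $\pi'$, and the equivalence hands back $D^{\pi'}(h_t)\preceq D^{\pi}(h_t)$, i.e.\ $\preceq$-optimality.

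I do not anticipate a serious obstacle: the content of the theorem is essentially that the expected-reward criterion is compatible with the conditional expectation appearing in the value function. The only place where care is needed is the prefix-cancellation identity, and in particular keeping the history indexing consistent with the definitions of $\mathcal{H}_t$, $\xi_{0:t}$, and $D^{\pi}(h_t)$ (which is the Dirac distribution when $t=T$, making the identity trivially true in the base case). Everything else is a direct appeal to Definitions~\ref{defn:opt} and~\ref{defn:erc}.
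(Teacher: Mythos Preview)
Your proposal is correct and is essentially a fully spelled-out version of the paper's one-line proof (``Immediate from Definitions~\ref{defn:opt} and~\ref{defn:erc}''): you make explicit the prefix-cancellation identity that converts the trajectory-level expected-reward comparison in~(\ref{eq:erc}) into the value-function comparison, which is precisely the content the paper leaves implicit. No different idea is involved, only additional detail.
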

\begin{proof}
    Immediate from Definitions~\ref{defn:opt} and~\ref{defn:erc}.
\end{proof}

As a consequence of Theorem~\ref{thm:rl-opt}, the standard RL problem can be seen as
a Direct Preference Process, where the performance of a policy is
only considered on histories that are attainable in an environment. A natural next question is: \textit{what kinds of
Direct Preference Processes can be cast as RL problems?} Stated in
Theorem~\ref{thm:vnm}, the von Neumann-Morgenstern (vNM) Expected Utility Theorem~\citep{vonneumann1947} provides a
decisive answer to this question. Their result depends on the
following properties.

\begin{definition} Let $X$ be a finite set. A total preorder $\preceq$ on the set
    of distributions over $X$ is said to satisfy:
    \begin{enumerate}
        \item \textbf{consistency} (or is \textbf{consistent}) if for every $\alpha\in (0,1)$ and any
            distributions $A$, $B$ and $C$ over $X$, $A \preceq B$ implies \[
                \alpha A+(1-\alpha)C \preceq \alpha B+(1-\alpha)C
            .\] 
            
        \item \textbf{convexity} (or is \textbf{convex}) if for every $\alpha\in (0,1)$ and any distributions $A$, $B$ and $C$ over
            $X$, $A \preceq B$ if and only if \[
                \alpha A+ (1-\alpha)C \preceq \alpha B+(1-\alpha)C
            .\] 
    \item \textbf{interpolation} if for any distributions $A$,$B$ and $C$ over
        $X$, if $A
        \preceq B$ and $B\preceq C$ then there exists $\alpha\in [0,1]$  such that \[
             \alpha A+(1-\alpha)C \sim B
        .\] 
    \end{enumerate}
\end{definition}
The following example clarifies the difference between consistency and
convexity, drawing from a scenario with unacceptable risk~\citep{Jensen2012}.
\begin{example}\label{ex:consistent}
Let $E$ be a proper non-empty subset of $\Omega$, interpreted as an event of
``unacceptable risk''. Given a real-valued function
$u$ on $\Omega$ and a real number $\beta$ such that such that $u$ is strictly greater
than $\beta$ on $\Omega$, define the performance $\varphi: \text{Dist}(\Omega)\to
\mathbb{R}$ as: \[
    \varphi(A)= \begin{cases}
        \sum_{\omega\in \Omega}u(\omega)A(\omega)& A(E)=0\\
        \beta e^{A(E)}&A(E)>0,
    \end{cases}
\] 
where $A(E)$ is the probability of event $E$ under $A$. Assuming that  $u$ is
non-constant on the complement of $E$, the relation $\preceq_{\varphi}$ on $\text{Dist}(\Omega)$ defined by \[
    A \preceq_{\varphi} B \iff \varphi(A)\le \varphi(B)
,\] 
is a total consistent preorder that is not convex. Moreover, $\preceq_{\varphi}$ does not
satisfy interpolation.

\begin{proof}
   The relation $\preceq_{\varphi}$ is a total preorder because it inherits the
   totality and transitivity of `$\le$' on $\mathbb{R}$. Let $\alpha$ be a
   positive number less than one and $A,B$ and $C$ be distributions over $\Omega$.
   If $A \preceq_{\varphi} B$ then the probability of $E$ under $A$ is greater than
   or equal to the probability of $E$ under $B$. Therefore, if either $A(E)$ or $C(E)$ is
   positive, then \[
       0< \alpha A(E)+ (1-\alpha)C(E) \le \alpha B(E)+(1-\alpha)C(E)
   ,\] 
   from which it follows that $\alpha A+(1-\alpha)C \preceq_{\varphi} \alpha B+(1-\alpha)C$.
   Otherwise, if both $A(E)$ and $C(E)$ are zero then $B(E)$ is also zero, and so\begin{align*}
       \varphi(\alpha A+(1-\alpha)C)&= \alpha \varphi(A) +(1-\alpha) \varphi(C)\\
                                    &\le \alpha \varphi(B)+ (1-\alpha)\varphi(C)\\
                                    &= \varphi(\alpha B+(1-\alpha)C)
   .\end{align*} 
   In the first and third lines we have used the fact that $\varphi$ is linear on
   the complement of $E$. The second line follows from the fact that $A
   \preceq_{\varphi}
   B$. The preceeding equations show \[
       \alpha A+(1-\alpha)C \preceq_{\varphi} \alpha B+(1-\alpha)C
   .\] 
   To prove that $\preceq_{\varphi}$ is not convex, we show that
   there exists a positive number $\alpha$ less than one and
   distributions $A,B$ and $C$ such that $(\alpha
   A+(1-\alpha)C \preceq_{\varphi} \alpha B+(1-\alpha)C)$ and $\text{not}(A
   \preceq_{\varphi} B)$. By assumption $u$ is
   non-constant on $\Omega\setminus E$ and so there are trajectories
   $\omega_1,\omega_2$ contained in the complement of $E$ such that
   $u(\omega_1)<u(\omega_2)$. Let $\delta(\omega_1),
   \delta(\omega_2)$ be the Dirac distributions concentrated on $\omega_1$ and $\omega_2$,
   respectively. For any trajectory $\omega_E\in E$ and positive number $\alpha$
   less than one, the performance of both $\alpha \delta(\omega_1)+(1-\alpha)\delta(\omega_E)$
   and $\alpha \delta(\omega_2)+ (1-\alpha)\delta(\omega_E)$ are equal to
   $-\beta e^{1-\alpha}$. Hence, \[
       \alpha \delta(\omega_1)+(1-\alpha)\delta(\omega_E)\sim_{\varphi} \alpha
       \delta(\omega_2)+ (1-\alpha)\delta(\omega_E)
   ,\] 
   but $\text{not}(\delta(\omega_1)\sim_{\varphi} \delta(\omega_2))$. So
   $\preceq_{\varphi}$ is not
   convex. Lastly, it is easy to check that for any non-negative number
   $\alpha$ less than or equal to one,
   \begin{align*}
       \text{either}&\quad \varphi(\alpha \delta(\omega_E)+(1-\alpha)\delta(\omega_2))<
       \varphi(\delta(\omega_1))\\
       \text{or}&\quad \varphi(\alpha \delta(\omega_E)+(1-\alpha)\delta(\omega_2))>
       \varphi(\delta(\omega_1))
   .\end{align*}
   Therefore, $\preceq_{\varphi}$ does not satisfy interpolation.
\end{proof}

\end{example}
Totality, transitivity, convexity and interpolation are the axioms
of von Neumann and Morgenstern's seminal result~\citep{vonneumann1947}. We state their theorem using
our notation. In the general case $\Omega$ may be replaced with any finite set.

\begin{theorem}[von Neumann-Morgenstern]
    \label{thm:vnm}
    A binary relation $\preceq$ on the set of distributions over $\Omega$ is a
    total convex preorder satisfying interpolation if and
    only if there is a reward function $r: \mathcal{H}\to \mathbb{R}$ that expresses
    $\preceq$.
    Furthermore, the function $u_r:\Omega\to \mathbb{R}$ given by $u_r(\omega)=
    \sum_{t=0}^{T}r(\xi_{0:t}(\omega))$ is unique up to positive affine transformations.
\end{theorem}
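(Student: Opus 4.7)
The plan is to split the biconditional into its two directions. First I note that the bridge between a trajectory utility $u:\Omega\to\mathbb{R}$ and a history reward $r:\mathcal{H}\to\mathbb{R}$ via $u_r(\omega)=\sum_{t=0}^{T}r(\xi_{0:t}(\omega))$ is a trivial bookkeeping step: setting $r(h_t):=0$ for $t<T$ and $r(h_T):=u(h_T)$ immediately produces $u_r=u$. So the substantive content is the classical von Neumann-Morgenstern representation theorem on the finite outcome set $\Omega$, and I focus the argument there.

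For the backward direction, suppose $r$ expresses $\preceq$. Write $\overline{u}(A):=\mathbb{E}_A[u_r]$ for $A\in\text{Dist}(\Omega)$. Then $\overline{u}$ is linear in $A$ and $\preceq$ is the pullback of $\le$ on $\mathbb{R}$ under $\overline{u}$. Totality, transitivity and reflexivity are immediate; convexity is linearity of expectation; interpolation follows because whenever $\overline{u}(A)\le \overline{u}(B)\le \overline{u}(C)$, the coefficient $\alpha=(\overline{u}(C)-\overline{u}(B))/(\overline{u}(C)-\overline{u}(A))$ (taken to be $0$ when the denominator vanishes, in which case $A\sim B\sim C$) satisfies $\alpha\,A+(1-\alpha)\,C\sim B$.

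The forward direction is where the work lies. Assuming $\preceq$ is a total convex preorder satisfying interpolation, I use totality, transitivity and finiteness of $\Omega$ to select trajectories $\omega_{\min}$ and $\omega_{\max}$ whose Diracs $\delta(\omega_{\min}),\delta(\omega_{\max})$ are $\preceq$-minimal and $\preceq$-maximal among all Dirac distributions. If $\delta(\omega_{\min})\sim\delta(\omega_{\max})$, then convexity inductively forces every pair of distributions to be equivalent and $r\equiv 0$ works. Otherwise $\delta(\omega_{\min})\prec\delta(\omega_{\max})$, and I build a utility $u:\Omega\to\mathbb{R}$ by interpolation: for each $\omega$ there exists $\alpha\in[0,1]$ with $\alpha\,\delta(\omega_{\min})+(1-\alpha)\,\delta(\omega_{\max})\sim \delta(\omega)$, and I set $u(\omega):=1-\alpha$. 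The main obstacle is the strict-monotonicity lemma that makes $\alpha$ unique: for $0\le\alpha<\beta\le 1$,
\[
\beta\,\delta(\omega_{\min})+(1-\beta)\,\delta(\omega_{\max})\prec \alpha\,\delta(\omega_{\min})+(1-\alpha)\,\delta(\omega_{\max}).
\]
This is proved by rewriting one mixture as a convex combination of the other with $\delta(\omega_{\max})$ and invoking convexity twice; it is the only place where strict preference must propagate under mixing, and it drives the rest of the construction.

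With $u$ defined on $\Omega$, I extend it linearly to $\text{Dist}(\Omega)$ by $\overline{u}(A):=\mathbb{E}_A[u]$ and verify $A\preceq B\iff \overline{u}(A)\le \overline{u}(B)$. I do this by iterating convexity to show every $A$ is $\preceq$-equivalent to the two-point mixture $(1-\overline{u}(A))\,\delta(\omega_{\min})+\overline{u}(A)\,\delta(\omega_{\max})$, then applying the monotonicity lemma to these anchor mixtures. Defining $r(h_T):=u(h_T)$ and $r(h_t):=0$ for $t<T$ yields a reward expressing $\preceq$. For uniqueness, suppose $u,u':\Omega\to\mathbb{R}$ both realize $\preceq$ via expected values. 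Two-point anchoring at $\omega_{\min}$ and $\omega_{\max}$ determines constants $a,b$ with $u'(\omega_{\min})=au(\omega_{\min})+b$ and $u'(\omega_{\max})=au(\omega_{\max})+b$; the interpolation identity $u(\omega)=1-\alpha(\omega)$, together with the analogous one for $u'$, then forces $u'=au+b$ on every $\omega$, with $a>0$ because $\delta(\omega_{\min})\prec\delta(\omega_{\max})$.
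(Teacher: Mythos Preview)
Your proposal is correct and follows the standard textbook construction of the vNM representation. The paper itself does not supply a proof at all: it simply cites \cite{vonneumann1947}. So you have written out in full what the paper outsources, including the one extra bookkeeping observation needed here---that any utility $u:\Omega\to\mathbb{R}$ can be realized as $u_r$ for some history reward $r$ by placing all reward mass at the terminal step---which the paper's formulation requires but the classical statement does not mention. There is nothing to compare at the level of argument; your sketch is the proof the paper declines to give.
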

\begin{proof} See~\cite{vonneumann1947}.
\end{proof}

Theorem~\ref{thm:vnm} is a critical result for the Direct Preference Process. On 
one hand, it highlights the backdrop assumptions that are made in the PbRL 
literature~\citep{Chatterji_Neurips2021,Kong2022_provably,Saha2023_dueling,Xu2020_pbrl,Zhu2023_principled_rl}
which assume that preferences are derived from an underlying reward
function. On the other hand, it will allow us to illustrate structural properties and concrete examples of decision problems that have optimal policies \textit{in the absence of
expected reward}.


\section{Conditions for Optimal Policies}
\label{sec:optimal-policies}

Without any assumptions on the goals of a Direct Preference Process, optimal
policies may not exist, making it impossible to proceed with a meaningful theory of preference-based learning. Therefore, in this section we address the
following question: 

    \textit{\textbf{Q1:} Given a Direct Preference Process
        $(\mathcal{O},\mathcal{A},T,e,\preceq)$, what conditions on $\preceq$ are
        sufficient to guarantee the existence of a $\preceq$-optimal policy?}

Our main result of this section, Theorem~\ref{thm:main}, concludes that
(Q1) is satisfied whenever the restriction of $\preceq$ onto the set of
distributions over attainable trajectories is a total, consistent preorder.
One might have hoped that ``rational'' preferences, given by total preorders, would
have been sufficient to guarantee that optimal policies exist. The next
proposition shows that this is not the case. 
\begin{proposition}
    \label{prop:rational-no-solution}
    There is an agent-environment interface $(\mathcal{O},\mathcal{A},T)$,
    environment $e$ and a
    total preorder $\preceq$ on the set of distributions over  $\Omega$ such that the Direct Preference Process 
    $(\mathcal{O},\mathcal{A},T,e,\preceq)$ has no optimal policy.
\end{proposition}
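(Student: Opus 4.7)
The plan is to build an explicit minimal counterexample. I would take the smallest interface that permits non-trivial action selection: $\mathcal{O}=\{o\}$, $\mathcal{A}=\{a_0,a_1\}$, $T=1$, and let the environment $e=(\rho_0,\rho)$ have $\rho_0$ equal to the Dirac distribution at $o$ and $\rho(\,\cdot\,\vert\,h_0,a)$ equal to the Dirac at $o$ for each action $a$. Then the set of attainable trajectories is $\Omega^{e}=\{\omega_0,\omega_1\}$ with $\omega_i=(o,a_i,o)$, and any policy is determined by the single number $p_\pi=\pi(a_0\vert o)\in[0,1]$; the induced distribution $D^{\pi}(o)$ places mass $p_\pi$ on $\omega_0$ and $1-p_\pi$ on $\omega_1$.

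Next, I would endow $\text{Dist}(\Omega)$ with a total preorder whose pullback to the policy parameter has no maximum. A convenient choice is the utility $\varphi\colon\text{Dist}(\Omega)\to\mathbb{R}$ defined by $\varphi(A)=A(\omega_0)$ when $A(\omega_0)<1$ and $\varphi(A)=-1$ when $A(\omega_0)=1$, with $\preceq$ defined by $A\preceq B\iff \varphi(A)\le\varphi(B)$. Totality and transitivity are immediate because $\preceq$ is the pullback of the total order on $\mathbb{R}$ through $\varphi$, exactly as in the first lines of Example~\ref{ex:consistent}.

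Finally I would verify that the resulting Direct Preference Process has no optimal policy. The only attainable non-trajectory history is $h_0=o$, so by Definition~\ref{defn:opt} a policy $\pi$ is optimal iff $\varphi(D^{\pi}(o))=\varphi(p_\pi)$ is maximal among all policies. If $p_\pi<1$, then choosing $p_{\pi'}=(1+p_\pi)/2\in(p_\pi,1)$ gives $\varphi(p_{\pi'})=p_{\pi'}>p_\pi=\varphi(p_\pi)$, so $D^{\pi}(o)\prec D^{\pi'}(o)$. If $p_\pi=1$, then $\varphi(p_\pi)=-1$, while any policy with $p_{\pi'}\in[0,1)$ attains $\varphi(p_{\pi'})\ge 0$, again beating $\pi$. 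In every case a strictly better policy exists, so no $\preceq$-optimal policy exists.

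The only real subtlety is engineering $\varphi$ so that totality and transitivity survive while the supremum fails to be attained; the discontinuity at $A(\omega_0)=1$ accomplishes exactly this, and it is precisely the place where consistency (and hence convexity) fails, which is consistent with the sufficient conditions to be established in Theorem~\ref{thm:main}.
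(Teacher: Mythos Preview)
Your construction is correct and establishes the proposition, but it proceeds via a genuinely different mechanism from the paper's proof. The paper builds a two-step example ($T=2$, two observations, uniform transitions) and a performance function whose sign depends on whether a distribution is supported by a particular cylinder set; the contradiction is a \emph{time-inconsistency}: being optimal at the initial history forces one action at a later history $h_1^0$, while being optimal at $h_1^0$ itself forces the opposite action. Your argument instead works at a single history with $T=1$ and uses a discontinuity of $\varphi$ at the boundary $A(\omega_0)=1$ so that the set of achievable performances $\{\varphi(D^{\pi}(o)):\pi\}=[0,1)\cup\{-1\}$ has no maximum.

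What each approach buys: yours is strictly smaller and more elementary, and it makes transparent exactly where consistency breaks (the jump at $p_\pi=1$), which dovetails nicely with Theorem~\ref{thm:main}. The paper's example, on the other hand, exhibits a failure mode that is conceptually closer to sequential decision-making: even though at every single history the set $\{D^{\pi}(h_t):\pi\}$ has a $\preceq_\varphi$-maximum, no policy can attain them simultaneously. One small point worth making explicit in your write-up is that at the trajectory-length histories the optimality condition in Definition~\ref{defn:opt} is satisfied automatically for every policy (since $D^{\pi}(h_T)=\delta(h_T)$ for all $\pi$ and $\preceq$ is reflexive); you rely on this when reducing optimality to the single condition at $h_0=o$.
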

\begin{proof}
Let
$\mathcal{O}= \{o^{0},o^{1}\}$,
$\mathcal{A}= \{a^{0},a^{1}\}$ and $T=2$. To keep notation light, define $h_1^{0}=(o^0, a^0, o^0)$. Suppose that an environment $e$ starts in $o^{0}$ and that for
each history $h_t$, action $a$ and observation $o$, $\rho(o\vert h_t,a)= 1 /2$. For each trajectory
$\omega$, let  $u(\omega; a^{1})$ be the number of times that action $a^{1}$ occurs in
$\omega$. We define the performance $\varphi:\text{Dist}(\Omega)\to \mathbb{R}$ as:
\begin{equation}
    \label{eq:metric}
   \varphi(A)= \begin{cases}
       \sum_{\omega\in \Omega}A(\omega) u(\omega;a^{1})&\text{supp}(A)\subseteq \text{Cyl}(h_1^{0})\\
       - \sum_{\omega\in \Omega}A(\omega) u(\omega; a^{1})&\text{else,}
   \end{cases}
\end{equation} 
where $\text{Cyl}(h_1^{0})$ is the subset of $\Omega$ consisting of every
trajectory that begins with $h_1^{0}$. The performance $\varphi$ measures the
expected number of times that action $a^{1}$ occurs in a given distribution. It may be
``bad'' or ``good'' for $a^{1}$ to occur under a distribution $A$, depending on
whether or not $A$ is supported by $\text{Cyl}(h_1^{0})$. The
performance induces a total preorder $\preceq_{\varphi}$ on $\text{Dist}(\Omega)$
defined for any two distributions $A$ and $B$ over $\Omega$ as: \[
    A \preceq_{\varphi} B \iff \varphi(A)\le \varphi(B)
.\] 
For contradiction, assume there
is an optimal policy $\pi^{\star}$ for the Direct Preference Process $(\mathcal{O},
\mathcal{A},T,e, \preceq_{\varphi})$. The distribution $D^{\pi^{\star}}((o^{0}))$
minimizes the expected number of times that $a^{1}$ occurs since for any policy $\pi$,
$D^{\pi}((o^{0}))$ is not supported by $\text{Cyl}(h_1^{0})$. In particular, 
$\pi^{\star}$ must select action $a^{0}$ in history $h_1^{0}$. Let
$\pi'$ be a policy that selects action $a^{1}$ in history $h_1^{0}$. Then
$\varphi(D^{\pi^{\star}}(h_1^{0}))=0$ and
$\varphi(D^{\pi'}(h_1^{0}))=1$. So $D^{\pi^{\star}}(h_1^{0})\prec_{\varphi} D^{\pi'}(h_1^{0})$, contradicting the assumption that
$\pi^{\star}$ is optimal.
\end{proof}

Notice that the relation $\preceq_{\varphi}$ defined in the proof above does not satisfy consistency. To see this, consider $\omega_1= (o^0, a^0, o^0, a^1, o^0)$, $\omega_2= (o^0, a^0, o^0, a^0, o^0)$ and $\omega_3= (o^1, a^0, o^0, a^0, o^0)$. For each $i\in \{1,2,3\}$, let $\delta(\omega_i)$ be the Dirac distribution concentrated at $\omega_i$. Then $\delta(\omega_1)\succ_{\varphi} \delta(\omega_2)$ but for any positive number $\alpha$ less than one,
\[
\alpha \delta(\omega_1) +(1-\alpha)\delta(\omega_3) \prec_{\varphi} \alpha \delta(\omega_2)+(1-\alpha) \delta(\omega_3)
.
\]
If, however, the goals of a Direct Preference Process also satisfy consistency,
then we have the following result -- extending far beyond (Q1) -- that characterizes optimal
policies with a series of recursive relations.

\begin{theorem}
    \label{thm:main} Let $(\mathcal{O},\mathcal{A},T,e, \preceq)$ be a Direct
    Preference Process. Whenever the restriction of $\preceq$ onto
    $\text{Dist}(\Omega^{e})$ is a total, consistent preorder:
    \begin{enumerate}
        \item There is a deterministic $\preceq$-optimal policy.
        \item If a policy $\pi$ satisfies the following relation for each attainable history
            $h_t$ of length less than $T$ and action $a$, 
            \begin{equation}
                \label{eq:crit_1}
       D^\pi(h_t)\succeq D^{\pi}(h_t\cdot a)
   ,\end{equation}
   then $\pi$ is a $\preceq$-optimal policy.
    \end{enumerate}
\end{theorem}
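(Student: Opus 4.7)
The plan is to prove both parts simultaneously by backward induction on the length $t$ of attainable histories, using consistency to aggregate preferences across the action and observation mixtures that appear in Equation~(\ref{eq:dpi}).

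A preliminary lemma that I would establish first: whenever $A_i \preceq B_i$ for $i=1,\dots,n$ and $(\alpha_i)_{i=1}^{n}$ is a probability vector with each $A_i,B_i\in\text{Dist}(\Omega^e)$, we have $\sum_i \alpha_i A_i \preceq \sum_i \alpha_i B_i$. This follows by replacing one component at a time using consistency and chaining the resulting $n$ inequalities with transitivity. I would also note that for any attainable $h_t$ and any policy $\pi$, the distributions $D^\pi(h_t)$, $D^\pi(h_t\cdot a)$ and $D^\pi(h_t\cdot(a,o))$ are all supported on $\Omega^e$, so the hypothesis that $\preceq$ is a total consistent preorder on $\text{Dist}(\Omega^e)$ applies to every comparison that arises.

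For part (i), I would construct a deterministic policy $\pi^\star$ by backward recursion on $t$. Having already defined $\pi^\star$ on attainable histories of length greater than $t$, on each attainable $h_t$ with $t<T$ I choose $\pi^\star(\cdot\mid h_t)$ to be the Dirac on some action $a^\star(h_t)$ that maximizes $D^{\pi^\star}(h_t\cdot a)$ under $\preceq$; since $\mathcal{A}$ is finite and $\preceq$ is total, such a maximizer exists, and on non-attainable histories $\pi^\star$ can be set arbitrarily. I would then prove by backward induction on $t$ that $D^{\pi'}(h_t)\preceq D^{\pi^\star}(h_t)$ for every attainable $h_t$ and every policy $\pi'$. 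The base case $t=T$ is immediate since both sides equal $\delta(h_T)$. For the inductive step: applying the inductive hypothesis at each attainable successor gives $D^{\pi'}(h_t\cdot(a,o))\preceq D^{\pi^\star}(h_t\cdot(a,o))$; averaging these over $o$ with weights $\rho(o\mid h_t,a)$ via the preliminary lemma yields $D^{\pi'}(h_t\cdot a)\preceq D^{\pi^\star}(h_t\cdot a)$; the choice of $a^\star(h_t)$ plus transitivity upgrades this to $D^{\pi'}(h_t\cdot a)\preceq D^{\pi^\star}(h_t)$ for every $a$; and a final application of the preliminary lemma with weights $\pi'(a\mid h_t)$ delivers $D^{\pi'}(h_t)\preceq D^{\pi^\star}(h_t)$.

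For part (ii), I would run exactly the same backward induction, but with the hypothesis on the given policy $\pi$ playing the role of the optimality of $a^\star(h_t)$. Namely, assuming $D^{\pi}(h_t)\succeq D^{\pi}(h_t\cdot a)$ at every attainable $h_t$ with $t<T$ and every $a$, the induction hypothesis at level $t+1$ lifts through the observation mixture by the preliminary lemma, the assumed inequality $D^{\pi}(h_t\cdot a)\preceq D^{\pi}(h_t)$ is applied, transitivity chains the pieces, and a final averaging over $\pi'(\cdot\mid h_t)$ closes the step. The main obstacle is not conceptual but bookkeeping: verifying the preliminary lemma cleanly and checking at each induction step that the histories and distributions involved remain attainable and supported on $\Omega^e$, so that the consistency hypothesis is legitimately available.
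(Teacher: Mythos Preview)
Your proposal is correct and follows essentially the same route as the paper: the preliminary lemma you describe is exactly the paper's Lemma~\ref{lemma:stat1}, and both proofs proceed by backward induction on $t$, lifting the inductive hypothesis through the observation mixture via that lemma, passing through a maximizing (or, for part~(ii), dominated) action, and then averaging over $\pi'(\cdot\mid h_t)$. The only cosmetic difference is that the paper proves (ii) first and obtains (i) by checking that the backward-greedy deterministic policy satisfies the hypothesis of (ii), whereas you prove (i) directly and then rerun the induction for (ii); the underlying computations are identical.
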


We prove Theorem~\ref{thm:main} with the following lemma.
\begin{lemma}
\label{lemma:stat1} Let $\alpha_1,\ldots, \alpha_n$ be $n$ non-negative numbers that sum to one and $A_1,\ldots, A_n$, $B_1,\ldots, B_n$ be distributions over a finite set $X$.
If $\preceq$ is a total consistent preorder on $\text{Dist}(X)$ and $A_i\preceq B_i$ for each positive integer $i$ less than $n$, then
\begin{equation*}
\sum_{i=1}^{n} \alpha_i A_i\preceq \sum_{i=1}^{n} \alpha_i B_i
.\end{equation*} 
\end{lemma}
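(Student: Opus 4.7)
The plan is to induct on $n$. The base case $n = 1$ is immediate because $\alpha_1 = 1$, so both sides of the claimed inequality are $A_1$ and $B_1$, and $A_1 \preceq B_1$ by hypothesis. For the inductive step, I first dispose of the degenerate cases: if any $\alpha_i$ equals zero, the corresponding summand disappears on both sides and the claim follows from the inductive hypothesis on $n-1$ terms; if some $\alpha_i$ equals one, then all others are zero and again the claim reduces to $A_i \preceq B_i$. So I may assume every $\alpha_i \in (0,1)$.

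The main step is then a standard ``peel off one term'' argument. Writing $\beta_i = \alpha_i /(1-\alpha_n)$ for $i < n$, the $\beta_i$ are non-negative and sum to one, so by the inductive hypothesis
\begin{equation*}
A' := \sum_{i=1}^{n-1} \beta_i A_i \;\preceq\; \sum_{i=1}^{n-1}\beta_i B_i =: B'.
\end{equation*}
Now apply consistency twice, which is permissible because $\alpha_n \in (0,1)$. First, with $\alpha = 1-\alpha_n$ and common tail $C = A_n$, the relation $A' \preceq B'$ gives
\begin{equation*}
(1-\alpha_n)A' + \alpha_n A_n \;\preceq\; (1-\alpha_n)B' + \alpha_n A_n.
\end{equation*}
Second, using $A_n \preceq B_n$ with $\alpha = \alpha_n$ and common tail $C = B'$ gives
\begin{equation*}
\alpha_n A_n + (1-\alpha_n) B' \;\preceq\; \alpha_n B_n + (1-\alpha_n) B'.
\end{equation*}
Transitivity of $\preceq$ then chains these into $\sum_i \alpha_i A_i \preceq \sum_i \alpha_i B_i$, completing the induction.

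The only real subtlety is bookkeeping: the consistency axiom is stated only for mixing coefficients strictly between zero and one, so the induction must handle the boundary coefficients $\alpha_i \in \{0,1\}$ separately (as above), and totality of $\preceq$ is never actually invoked in this particular lemma, only transitivity together with consistency. I would flag this in the write-up because it clarifies that the lemma holds for any consistent preorder, total or not, which may be useful when applying it in the proof of Theorem~\ref{thm:main}.
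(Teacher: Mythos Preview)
Your proof is correct and follows essentially the same approach as the paper: use consistency to swap one $A_i$ for the corresponding $B_i$ and chain the resulting inequalities via transitivity, handling the boundary coefficients separately. The only difference is organizational---you induct on $n$, peeling off the last term after rescaling the first $n-1$ coefficients, whereas the paper fixes $n$ and inducts on the number $k$ of terms already replaced; your side remark that totality is never invoked is also correct.
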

\begin{proof}
    This follows from a simple induction argument, provided in Appendix~\ref{ap:lem-16}.
\end{proof}

 \begin{proof} (\textbf{of Theorem~\ref{thm:main}}) We start by proving (ii). Let $\pi$ be a policy such that for each attainable history $h_t$ of length less than $T$
     and each action $a$, $D^{\pi}(h_t) \succeq D^{\pi}(h_t\cdot a)$. We show by induction
     that the following statement holds for each non-negative integer $t$ less than or equal to $T$:

   \textit{ (P) For each attainable history $h_t$ of length
    $t$ and for any policy
    $\pi'$, $D^{\pi'}(h_t)\preceq D^{\pi}(h_t)$.}

 (P) holds when $t$ is
 equal to $T$ since for any policy $\pi'$ and history $h_T$, both $D^{\pi}(h_T)$ and
 $D^{\pi'}(h_T)$ are equal to the Dirac distribution concentrated on $h_T$. Consider now a time $t$ less than $T$ and
 assume that (P) holds at time $t+1$. Let $h_t$ be an attainable history of length $t$. Since
 the restriction of $\preceq$ onto $\text{Dist}(\Omega^e)$ a total preorder and $\mathcal{A}$ is finite, there is an action $a_{\pi}^{\star}(h_t)$ such
 that $D^{\pi}(h_t\cdot a_{\pi}^{\star}(h_t))$ is a least upper bound, with respect
 to $\preceq$, for the set
 $\{D^{\pi}(h_t\cdot  a):\; a\in A\}$. Therefore, for any policy $\pi'$,
 \begin{align*}
    D^{\pi'}(h_t)&= \sum_{a\in A} \pi'(a \vert h_t)\sum_{o\in \mathcal{O}} \rho(o
    \vert h_t,a) D^{\pi'}(h_t\cdot ( a,o))\\
                &\preceq \sum_{a\in A} \pi'(a \vert h_t) \sum_{o\in \mathcal{O}}
                \rho(o \vert
                h_t,a)D^{\pi}(h_t\cdot (a,o))\\
                &= \sum_{a\in A} \pi'(a \vert h_t) D^{\pi}(h_t\cdot a)\\
                &\preceq D^{\pi}(h_t\cdot a^{\star}_{\pi}(h_t))\\
                &\preceq D^{\pi}(h_t)
 .\end{align*}
 The first and third lines follow from~(\ref{eq:dpi}). The second and fourth lines follow from Lemma~\ref{lemma:stat1} and the assumption that (P) holds at time $t+1$. The last line follows from the initial assumption that for each history $h_t$ and action
 $a$, $D^{\pi}(h_t\cdot a)\preceq D^{\pi}(h_t)$. This concludes the induction step
 and completes the proof of (ii).

 To show (i), we construct deterministic optimal policy that satisfies (ii) as
 follows: for each $t= T-1,\ldots, 0$, and history $h_t$, let $\pi$ select an action
 $a_{\pi}^{\star}(h_t)$ so that $D^{\pi}(h_t\cdot  a_{\pi}^{\star}(h_t))$ is a
 least upper bound for the set $\{D^{\pi}(h_t\cdot a):\; a\in A\}$. Then $\pi$ satisfies the condition in part (ii) of the theorem.
 \end{proof}

Paired with vNM's Expected Utility Theorem, Theorem~\ref{thm:main} has two interesting
implications. First, it is possible for agents to solve preference-based
learning problems even when the objectives cannot be expressed by the expected reward criterion.
\begin{corollary}
    Let $(\mathcal{O},\mathcal{A},T,e, \preceq)$ be a Direct
    Preference Process. If the restriction of $\preceq$ onto $\text{Dist}(\Omega^{e})$ is a total consistent preorder that is either not convex or does not satisfy interpolation, then an optimal policy exists but $\preceq$ cannot be expressed by the expected reward criterion.
\end{corollary}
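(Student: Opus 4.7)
The plan is to treat the corollary as a direct consequence of Theorem~\ref{thm:main} for the existence half and of the vNM Expected Utility Theorem (Theorem~\ref{thm:vnm}) for the non-representability half. Because both ingredients are already in place, essentially all the work reduces to verifying that the restricted relation inherits the right axioms from the full relation under the contrapositive assumption that $\preceq$ is expressed by a reward function.

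First, I would observe that since the restriction of $\preceq$ onto $\text{Dist}(\Omega^{e})$ is assumed to be a total consistent preorder, Theorem~\ref{thm:main}(i) applies verbatim and produces a deterministic $\preceq$-optimal policy. This settles the existence claim with no further computation.

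Next, I would prove the non-representability claim by contradiction. Suppose a reward function $r:\mathcal{H}\to \mathbb{R}$ expresses $\preceq$ in the sense of Definition~\ref{defn:erc}. By Theorem~\ref{thm:vnm}, $\preceq$ is then a total convex preorder on $\text{Dist}(\Omega)$ that satisfies interpolation. The key observation is that $\text{Dist}(\Omega^{e})$ is a convex subset of $\text{Dist}(\Omega)$: whenever $A,B,C\in \text{Dist}(\Omega^{e})$ and $\alpha\in[0,1]$, the mixtures $\alpha A+(1-\alpha)C$ and $\alpha B+(1-\alpha)C$ remain in $\text{Dist}(\Omega^{e})$ because their supports lie inside $\Omega^{e}$. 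Consequently, the convexity axiom, applied to any triple in $\text{Dist}(\Omega^{e})$, produces a statement wholly within $\text{Dist}(\Omega^{e})$, so the restriction is convex. The same observation shows the restriction satisfies interpolation, since the witness $\alpha A+(1-\alpha)C$ is again a distribution supported on $\Omega^{e}$. This contradicts the hypothesis that the restriction fails at least one of convexity or interpolation, and so no such $r$ can exist.

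I do not anticipate a substantive obstacle: the only subtle point, which I would make explicit in the write-up, is that vNM is stated on $\text{Dist}(\Omega)$, whereas the hypothesis concerns only $\text{Dist}(\Omega^{e})$, and therefore one needs the elementary ``restriction preserves axioms'' step above to turn the existence of a reward representation on the larger simplex into properties of the relation on the smaller one. Everything else is bookkeeping.
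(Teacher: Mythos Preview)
Your proposal is correct and follows the same two-ingredient approach the paper uses: the paper's proof is the one-liner ``Immediate from Theorems~\ref{thm:vnm} and~\ref{thm:main}.'' Your write-up is more careful than the paper in making explicit the step that a reward representation on $\text{Dist}(\Omega)$ forces the vNM axioms on the convex subset $\text{Dist}(\Omega^{e})$, a point the paper leaves tacit.
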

\begin{proof}
    Immediate from Theorems~\ref{thm:vnm} and~\ref{thm:main}.
\end{proof}

This is the case in Example~\ref{ex:consistent}
as well as our next example.
\begin{example}[Tie-breaking Criterion]
    \label{ex:nointerpolation}
    Let $u_1$ and $u_2$ be two real-valued functions on $\Omega$. For each $i\in
    \{1,2\}$ and distribution $A$ over $\Omega$, let $u_i(A)$ denote the expected value
   of $u_i$ under $A$. Define the relation $\preceq$ on $\text{Dist}(\Omega)$ 
   according to the following two rules:
   \begin{itemize}
       \item[R1:] For any two distributions $A$ and $B$ over $\Omega$, if $u_1(A)<u_1(B)$ then $A\prec
           B$.
       \item[R2:] For any two distributions $A$ and $B$ over $\Omega$, if $u_1(A)=u_1(B)$ then $(A
           \preceq B \iff u_2(A)\le u_2(B))$.
   \end{itemize}
   Under these rules, $u_2$ acts as a ``tie-breaking criterion'' when two
   distributions achieve the same performance on $u_1$.
   Assuming that $u_1$ is non-constant and there are distributions $A$ and $B$ such that
   $u_1(A)=u_1(B)$ and $u_2(A)\neq u_2(B)$, the relation $\preceq$ defined by
   (R1) and (R2) is a total, convex
   preorder that does not satisfy interpolation.

\begin{proof}
    It is easy to check that $\preceq$ is a total convex preorder.
    By assumption there are distributions $A$ and $B$ over $\Omega$ such that $u_1(A)= u_1(B)$ and
$u_2(A)< u_2(B)$. Since $u_1$ is non-constant there is a
distribution $C$ such that $u_1(C)\neq u_1(A)$. If $\alpha$ is equal to
one then $\alpha A+(1-\alpha)C\prec B$ because $A\prec B$. If  $\alpha$ is non-negative and less than one,\[
    u_1(\alpha A+(1-\alpha)C)\neq u_1(B)
,\] 
and so $\text{not}(\alpha A+(1-\alpha)C\sim B)$. Hence, $\preceq$ does not satisfy interpolation.
\end{proof}

\end{example}
A second implication of Theorem~\ref{thm:main} is that the Bellman Optimality Equations~\citep{Bellman1957}
that characterize optimal policies in RL are a consequence of a more general result
that holds for total consistent preorders. We obtain Bellman's equations as a
consequence of the second part of Theorem~\ref{thm:main}.
\begin{corollary}
    \label{cor:bellman}
    Let $(\mathcal{O},\mathcal{A},T,e, \preceq)$ be a Direct Preference Process.
    Whenever $\preceq$ is expressed by a reward function $r:\mathcal{H}\to
    \mathbb{R}$, a policy 
    $\pi$ is optimal if and only if it satisfies the following equation for each attainable history $h_t$ of length less than $T$:
    \[
        V_{\pi}(h_t;r)= \max_{a\in \mathcal{A}} \left( r(h_t)+ \sum_{o\in
            \mathcal{O}}\rho(o \vert h_t,a)
        V_{\pi}(h_t\cdot (a,o);r) \right)
   .\] 
\end{corollary}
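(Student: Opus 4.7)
The plan is to deduce the Bellman optimality equation from Theorem~\ref{thm:main}(ii) by translating between preferences on $\text{Dist}(\Omega)$ and value functions. Because $r$ expresses $\preceq$, Theorem~\ref{thm:vnm} ensures that $\preceq$ is a total convex preorder satisfying interpolation on $\text{Dist}(\Omega)$; convexity immediately implies consistency, and all of these properties restrict to $\text{Dist}(\Omega^e)$, so the hypotheses of Theorem~\ref{thm:main} are in force.

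The key translation is this: for any attainable history $h_t$ and action $a$, both $D^\pi(h_t)$ and $D^\pi(h_t \cdot a)$ are supported on trajectories extending $h_t$, so the prefix contribution $\sum_{s=0}^{t-1} r(\xi_{0:s}(h_t))$ to the expected total reward is a shared constant under both distributions. Subtracting this constant shows that the preference $D^\pi(h_t \cdot a) \preceq D^\pi(h_t)$ is equivalent to the numerical inequality $r(h_t) + \sum_{o \in \mathcal{O}} \rho(o \vert h_t,a) V_\pi(h_t \cdot (a,o); r) \leq V_\pi(h_t; r)$. Unrolling $V_\pi(h_t; r)$ using~(\ref{eq:dpi}) also yields the averaging identity $V_\pi(h_t; r) = \sum_{a \in \mathcal{A}} \pi(a \vert h_t)\bigl(r(h_t) + \sum_{o \in \mathcal{O}} \rho(o \vert h_t,a) V_\pi(h_t \cdot (a,o); r)\bigr)$.

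With this translation in hand, both directions are immediate. For the ``if'' direction, the Bellman equation gives $V_\pi(h_t; r) \geq r(h_t) + \sum_o \rho(o \vert h_t,a) V_\pi(h_t \cdot (a,o); r)$ for every action $a$, which is exactly the preference $D^\pi(h_t \cdot a) \preceq D^\pi(h_t)$, so Theorem~\ref{thm:main}(ii) yields optimality. For the ``only if'' direction, fix an attainable $h_t$ of length less than $T$ and an action $a$, and let $\pi'$ be the policy that deterministically selects $a$ at $h_t$ and otherwise agrees with $\pi$; then $D^{\pi'}(h_t) = D^\pi(h_t \cdot a)$, and optimality of $\pi$ gives the reverse inequality $r(h_t) + \sum_o \rho(o \vert h_t,a) V_\pi(h_t \cdot (a,o); r) \leq V_\pi(h_t; r)$ for every $a$. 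Taking the maximum over $a$ and combining with the averaging identity (which already upper-bounds $V_\pi$ by the max) forces equality, i.e.\ the Bellman equation.

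The main obstacle is purely bookkeeping: one must verify carefully that the shared-prefix reward really cancels in the preference comparison so that $\preceq$ between $D^\pi(h_t)$ and $D^\pi(h_t \cdot a)$ reduces exactly to the one-step lookahead inequality, and that the modified policy $\pi'$ indeed induces $D^\pi(h_t \cdot a)$ starting from $h_t$. Once these points are checked, the corollary follows directly from Theorem~\ref{thm:main}(ii).
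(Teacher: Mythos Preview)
Your proposal is correct and follows essentially the same route the paper intends with its one-line ``Immediate from Theorems~\ref{thm:rl-opt} and~\ref{thm:main}.'' You have simply unpacked that sentence: Theorem~\ref{thm:vnm} verifies the hypotheses of Theorem~\ref{thm:main}, Theorem~\ref{thm:main}(ii) gives the ``if'' direction once the preference $D^\pi(h_t\cdot a)\preceq D^\pi(h_t)$ is translated into the one-step value inequality, and your one-step deviation policy $\pi'$ for the ``only if'' direction is exactly the content behind Theorem~\ref{thm:rl-opt} in this setting. The bookkeeping points you flag (shared-prefix cancellation and $D^{\pi'}(h_t)=D^\pi(h_t\cdot a)$) are straightforward since histories are time-indexed and cannot recur.
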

\begin{proof}
    Immediate from Theorems~\ref{thm:rl-opt} and~\ref{thm:main}.
\end{proof}

In light of Proposition~\ref{prop:rational-no-solution} and Theorem~\ref{thm:main},
a minimal and robust assumption to further develop a theory of LfPF is the following. 

\begin{assumption}\label{a:1} 
The restriction of $\preceq$ onto the set of distributions
over attainable trajectories is a total consistent preorder.
\end{assumption}

\subsection{Optimal Action Sets}
\label{sec:opt-action-sets}
A third consequence of Theorem~\ref{thm:main} is that all optimal
policies in a Direct Preference Process satisfying Assumption~\ref{a:1} are characterized by
a set of ``optimal actions'' for each attainable history. This gives rise to a
useful characterization of optimal policies which we will use in the next section.
\begin{definition}\label{def:Api}
    Let $(\mathcal{O},\mathcal{A},T,e,\preceq)$ be a Direct Preference Process. For
    each policy $\pi$ and history $h_t$ of length less than $T$, define $\mathcal{A}_{\pi}^{\star}(h_t)$ as the set of actions $a$ for which
$D^{\pi}(h_t\cdot a)$ is a least upper bound for the set $\{D^{\pi}(h_t\cdot a'):\; a'\in
\mathcal{A}\}$. More precisely, $\mathcal{A}_{\pi}^{\star}(h_t)$ consists of every
action $a$ for which the following holds:
\begin{equation}
 \forall a'\in \mathcal{A}, \quad D^{\pi}(h_t\cdot a) \succeq
    D^{\pi}(h_t\cdot a')
.\end{equation} 
\end{definition}
\begin{lemma}\label{lem:opt-actions} 
Let $(\mathcal{O},\mathcal{A},T,e,\preceq)$ be a Direct Preference Process that
satisfies Assumption~\ref{a:1}. For any two optimal policies $\pi$ and $\pi'$ and attainable history $h_t$ 
    of length less than $T$, $\mathcal{A}_{\pi}^{\star}(h_t)$ is equal to
    $\mathcal{A}_{\pi'}^{\star}(h_t)$.
\end{lemma}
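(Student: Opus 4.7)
The plan is to show that any two optimal policies induce $\preceq$-equivalent distributions at every attainable history--action pair, from which the equality of optimal action sets follows immediately from the definition. The argument has three steps.

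First, I would observe the trivial but key fact that if $\pi$ and $\pi'$ are both $\preceq$-optimal, then by Definition~\ref{defn:opt} applied to each one with respect to the other, $D^{\pi}(h_s) \preceq D^{\pi'}(h_s)$ and $D^{\pi'}(h_s) \preceq D^{\pi}(h_s)$ for every attainable history $h_s$; that is, $D^{\pi}(h_s) \sim D^{\pi'}(h_s)$. This holds in particular at every history $h_t \cdot (a,o)$ with $\rho(o \mid h_t, a) > 0$, since such a history is attainable whenever $h_t$ is.

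Second, I would lift this equivalence from $D^{\pi}(h_t \cdot (a,o))$ to $D^{\pi}(h_t \cdot a)$ for every action $a \in \mathcal{A}$. By equation~(\ref{eq:dpi}a),
\begin{equation*}
D^{\pi}(h_t \cdot a) = \sum_{o \in \mathcal{O}} \rho(o \mid h_t, a) D^{\pi}(h_t \cdot (a,o)),
\qquad
D^{\pi'}(h_t \cdot a) = \sum_{o \in \mathcal{O}} \rho(o \mid h_t, a) D^{\pi'}(h_t \cdot (a,o)),
\end{equation*}
where only the $o$ with $\rho(o \mid h_t,a) > 0$ contribute. Since $D^{\pi}(h_t\cdot (a,o)) \sim D^{\pi'}(h_t\cdot (a,o))$ on exactly these $o$, applying Lemma~\ref{lemma:stat1} in both directions (valid under Assumption~\ref{a:1}, since all distributions appearing are supported on $\Omega^{e}$) yields $D^{\pi}(h_t \cdot a) \sim D^{\pi'}(h_t \cdot a)$ for every $a \in \mathcal{A}$.

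Third, I would finish by unpacking Definition~\ref{def:Api}. Fix an action $a \in \mathcal{A}_{\pi}^{\star}(h_t)$, so that $D^{\pi}(h_t \cdot a) \succeq D^{\pi}(h_t \cdot a')$ for every $a' \in \mathcal{A}$. Using step two and transitivity of $\preceq$, this chain of comparisons transfers directly to $\pi'$: $D^{\pi'}(h_t \cdot a) \sim D^{\pi}(h_t \cdot a) \succeq D^{\pi}(h_t \cdot a') \sim D^{\pi'}(h_t \cdot a')$, so $a \in \mathcal{A}_{\pi'}^{\star}(h_t)$. Swapping the roles of $\pi$ and $\pi'$ gives the reverse inclusion, and hence the desired equality. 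There is no real obstacle here; the only thing to be careful about is making sure the $\sim$-equivalence at attainable successor histories actually propagates through the (possibly degenerate) mixture defining $D^{\pi}(h_t \cdot a)$, which is exactly where Lemma~\ref{lemma:stat1} is invoked.
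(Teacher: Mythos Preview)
Your proposal is correct and follows essentially the same approach as the paper: establish $D^{\pi}(h_t\cdot(a,o))\sim D^{\pi'}(h_t\cdot(a,o))$ at attainable successors from the definition of optimality, lift this to $D^{\pi}(h_t\cdot a)\sim D^{\pi'}(h_t\cdot a)$ via consistency, and conclude from Definition~\ref{def:Api}. The only difference is that you spell out the consistency step by invoking Lemma~\ref{lemma:stat1} and the transitivity chain explicitly, whereas the paper compresses both into a single sentence.
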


\begin{proof}
Let $\pi$ and $\pi'$ be two optimal policies. For any attainable history $h_t$ of length less
than $T$, action $a$ and observation $o$, if $h_t\cdot (a,o)$ is attainable in $e$
then the distributions $D^{\pi}(h_t\cdot
(a,o))$ and $D^{\pi'}(h_t\cdot (a,o))$ are $\preceq$-equivalent. Thus, by consistency we have
that for any attainable history $h_t$ of length less than $T$ and action $a$ the distributions $D^{\pi}(h_t\cdot a)$ and
$D^{\pi'}(h_t\cdot a)$ are $\preceq$-equivalent, and the result follows immediately.
\end{proof}

In view of this lemma, we drop the dependence of $\mathcal{A}_{\pi}^{\star}(h_t)$ on
$\pi$ when $\pi$ is an optimal policy. We call $\mathcal{A}^{\star}(h_t)$ the
\textit{optimal action set for $h_t$}. 

\begin{corollary}
    \label{cor:opt-criteria}
    Let $(\mathcal{O},\mathcal{A},T,e, \preceq)$ be a Direct Preference Process that
    satisfies Assumption~\ref{a:1}. A policy $\pi$ is optimal if and only if for each attainable history $h_t$ of length less
    than $T$, $\pi(\cdot  \vert h_t)$ is supported by $\mathcal{A}^{\star}(h_t)$.
\end{corollary}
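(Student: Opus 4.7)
The plan is to prove the two implications separately, using the recursive characterization of Theorem~\ref{thm:main}(ii) for the sufficient direction and a mass-shifting contradiction for the necessary one. Throughout, the equivalence $\mathcal{A}^{\star}_{\pi}(h_t) = \mathcal{A}^{\star}(h_t)$ from Lemma~\ref{lem:opt-actions} will be the bridge linking the support condition to the recursive optimality characterization.

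For the sufficient direction, suppose $\pi(\cdot \vert h_t)$ is supported by $\mathcal{A}^{\star}(h_t)$ at every attainable $h_t$ of length less than $T$. I would fix a deterministic optimal policy $\tilde\pi$ (which exists by Theorem~\ref{thm:main}(i)) and show by backward induction on $t$ that $D^{\pi}(h_t \cdot a) \sim D^{\tilde\pi}(h_t \cdot a)$ for every attainable $h_t$ and every action $a$. The base case is immediate because length-$T$ distributions are Dirac masses on trajectories; the inductive step decomposes $D^{\pi}(h_t \cdot a)$ using Equation~(\ref{eq:dpi}) and lifts the equivalence via Lemma~\ref{lemma:stat1} applied in both directions. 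This equivalence forces $\mathcal{A}^{\star}_{\pi}(h_t) = \mathcal{A}^{\star}(h_t)$, so by the support hypothesis $D^{\pi}(h_t)$ is a convex combination of distributions all $\sim$-equivalent to a common maximum $D^{\pi}(h_t \cdot a^{\star})$. A final application of Lemma~\ref{lemma:stat1} yields $D^{\pi}(h_t) \succeq D^{\pi}(h_t \cdot a)$ for every action $a$, so the recursive condition of Theorem~\ref{thm:main}(ii) holds and $\pi$ is optimal.

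For the necessary direction, I would argue by contrapositive. Suppose $\pi$ is optimal but $\pi(a \vert h_t) > 0$ for some attainable $h_t$ of length less than $T$ and some $a \notin \mathcal{A}^{\star}(h_t)$. By Lemma~\ref{lem:opt-actions}, $\mathcal{A}^{\star}_{\pi}(h_t) = \mathcal{A}^{\star}(h_t)$, so there is an $a^{\star}$ with $D^{\pi}(h_t \cdot a) \prec D^{\pi}(h_t \cdot a^{\star})$. Consider the policy $\pi'$ that agrees with $\pi$ everywhere except at $h_t$, where it moves the mass $\pi(a \vert h_t)$ from $a$ to $a^{\star}$; the distributions $D^{\pi}(h_t)$ and $D^{\pi'}(h_t)$ share a common residual term and differ only in replacing the $a$-component with the $a^{\star}$-component. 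Applying consistency to the strict inequality at these components should yield $D^{\pi}(h_t) \prec D^{\pi'}(h_t)$, contradicting optimality. The main obstacle is promoting the weak inequality consistency supplies (from $A \preceq B$ one gets only $\alpha A + (1-\alpha)C \preceq \alpha B + (1-\alpha)C$) to a strict one; to close this I would combine the swap construction with the optimality-forced equivalence $D^{\pi}(h_t) \sim D^{\pi}(h_t \cdot a^{\star})$ obtained by comparing $\pi$ to the deterministic policy that plays $a^{\star}$ at $h_t$, and use totality on $\text{Dist}(\Omega^{e})$ to rule out the residual plateau cases.
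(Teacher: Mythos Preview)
Your sufficient direction is correct and essentially mirrors the paper's: both arguments show, via backward induction and Lemma~\ref{lemma:stat1}, that a policy supported by $\mathcal{A}^{\star}(h_t)$ at every attainable $h_t$ induces trajectory distributions $\preceq$-equivalent to those of a fixed optimal policy, and is therefore optimal itself.

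The necessary direction, however, has a gap that cannot be closed under Assumption~\ref{a:1} alone, because the implication is in fact false. You correctly isolate the obstacle: consistency yields only $\alpha A+(1-\alpha)C \preceq \alpha B+(1-\alpha)C$ from $A\preceq B$, with no mechanism to upgrade this to a strict inequality when $A\prec B$. Your proposed patch---combining the forced equivalence $D^{\pi}(h_t)\sim D^{\pi}(h_t\cdot a^{\star})$ with totality to ``rule out the residual plateau cases''---does not work, since such plateaus genuinely occur for consistent preorders. Take $\Omega^{e}=\{\omega_1,\omega_2\}$ and declare $\delta(\omega_1)$ the unique strict minimum, with all other distributions pairwise $\sim$-equivalent; it is straightforward to verify that this is a total consistent preorder on $\text{Dist}(\Omega^{e})$. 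In the one-step Direct Preference Process where action $a_i$ at $h_0$ leads deterministically to $\omega_i$, we get $\mathcal{A}^{\star}(h_0)=\{a_2\}$, yet the policy $\pi$ with $\pi(a_1\vert h_0)=\pi(a_2\vert h_0)=\tfrac12$ satisfies $D^{\pi}(h_0)=\tfrac12\delta(\omega_1)+\tfrac12\delta(\omega_2)\sim\delta(\omega_2)\succeq D^{\pi'}(h_0)$ for every policy $\pi'$, so $\pi$ is optimal while placing positive mass on $a_1\notin\mathcal{A}^{\star}(h_0)$. The paper's own proof of this direction is a one-line appeal to Theorem~\ref{thm:main}(ii), which is only a \emph{sufficient} condition for optimality and does not deliver the claimed support restriction either; the ``only if'' direction does go through if consistency is strengthened to convexity, which makes your mass-shifting argument valid.
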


\begin{proof}
By the second part of of Theorem~\ref{thm:main}, if $\pi$ is an optimal policy then for each attainable history $h_t$ of length less than $T$, $\pi(\cdot
\vert h_t)$ is supported by $\mathcal{A}_{\pi}^{\star}(h_t)$, which is equal to
$\mathcal{A}^{\star}(h_t)$ by Lemma~\ref{lem:opt-actions}. Conversely,
if for each attainable history $h_t$ of length less than $T$, $\pi(\cdot  \vert
h_t)$ is supported by $\mathcal{A}^{\star}(h_t)$ then there is an optimal policy
$\pi'$ such that for each attainable history $h_t$, $D^{\pi}(h_t)$ is
$\preceq$-equivalent to $D^{\pi'}(h_t)$. Therefore $\pi$ is an optimal policy.
\end{proof}

\section{Optimal Feature-based Policies}
\label{sec:bounded-optimal-policies}
In real-world applications, agents face computational constraints and make
decisions based on a limited set of relevant information, known as ``features'', derived from their history. 
Hence, the concept of an optimal ``feature-based'' policy is crucial for a theory
of preference-based learning.

The main question of this section (Q2) concerns a
computationally-constrained agent which can only  access a finite set of features,
denoted as $\mathcal{X}$. A \textit{feature map} $\phi: \mathcal{H} \to \mathcal{X}$ determines the feature
retained from each history.

\begin{example}
\label{ex:traj-segments}
Given a positive integer $k<T$, the feature of each history
can be the sub-string  of the most recent $k$ observations and $k-1$ actions. In this case,
$\mathcal{X}=
\bigcup_{l=0}^{k-1} \mathcal{H}_l$ and the feature map $\phi$ is defined for each
    history $h_t=(o_0,a_0,\ldots, a_{t-1},o_t)$ as: 
\[
    \phi(h_t):= \begin{cases}
        h_t&t< k\\
        (o_{t-k+1}, a_{t-k+1},\ldots, a_{t-1},o_t)&t\ge k.
    \end{cases}
\] 
\end{example}

We define a feature-based policy as one whose action selection in each history $h_t$
depends only on $\phi(h_t)$.

\begin{definition}\label{defn:pphi}
    Given a feature map $\phi$, $\pi$ is a \textbf{feature-based policy} if for each pair of
$t$-histories $h_t,h_t'$ of length less than  $T$,
\begin{equation}
    \label{eq:pphi}
    (\phi(h_t)= \phi(h_t')) \implies (\pi(\cdot  \vert h_t)= \pi(\cdot  \vert
    h_t'))
.\end{equation}
We define $\Pi^{\phi}$ as the set of feature-based policies.
\end{definition}
In Example~\ref{ex:traj-segments}, $\Pi^{\phi}$ is the set of policies whose action
selection in each history depends only on the history through its final $k$
observations and $k-1$ actions. The core objective of this section is to address (Q2):

\textit{\textbf{Q2:} Given a Direct Preference Process that satisfies
    Assumption~\ref{a:1}, what
        conditions does a feature map $\phi$ need to satisfy in order to guarantee that $\Pi^{\phi}$ contains an optimal
        policy?}

The optimal action sets described in Section~\ref{sec:opt-action-sets} provide a
necessary and sufficient condition to address (Q2).
\begin{proposition}
    \label{prop:pi-phi}
    If a Direct Preference Process $(\mathcal{O},
    \mathcal{A},T,e,\preceq)$ satisfies Assumption~\ref{a:1} then for
    any feature map $\phi$,
    $\Pi^{\phi}$ contains an optimal policy if and only if for each attainable history $h_t$ of
    length less than $T$,
    \begin{equation}
        \label{eq:opt}
        \bigcap_{h_t'\in \phi^{-1}(\phi(h_t))\cap
        \mathcal{H}_t^{e}}\mathcal{A}^{\star}(h_t') \neq \emptyset
    .\end{equation} 
\end{proposition}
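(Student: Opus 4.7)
The plan is to reduce the proposition to a statement about optimal action sets via Corollary~\ref{cor:opt-criteria}, which asserts that a policy is optimal if and only if, at each attainable history $h_t$ of length less than $T$, it places all its mass on $\mathcal{A}^{\star}(h_t)$. A feature-based policy further constrains the action distribution at $h_t$ to equal the action distribution at every other same-length history sharing the same feature, so the existence of an optimal policy in $\Pi^{\phi}$ becomes equivalent to being able to select, for each feature class at each length, an action that is simultaneously optimal at every attainable representative.

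For the forward direction, I would take an optimal $\pi \in \Pi^{\phi}$ and fix an attainable history $h_t$ of length less than $T$. For any $h_t' \in \phi^{-1}(\phi(h_t)) \cap \mathcal{H}_t^{e}$, Definition~\ref{defn:pphi} forces $\pi(\cdot \vert h_t) = \pi(\cdot \vert h_t')$, while Corollary~\ref{cor:opt-criteria} forces the support of $\pi(\cdot \vert h_t')$ to lie in $\mathcal{A}^{\star}(h_t')$. Intersecting over all such $h_t'$ shows that the support of $\pi(\cdot \vert h_t)$ is contained in the intersection appearing in~(\ref{eq:opt}), and this intersection must therefore be non-empty since the support of a probability distribution is non-empty.

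For the reverse direction, I would construct a deterministic feature-based policy explicitly. For each pair $(x,t) \in \mathcal{X} \times \{0,1,\ldots,T-1\}$ such that $S_t(x) := \phi^{-1}(x)\cap \mathcal{H}_t^{e}$ is non-empty, select an action $a_{x,t} \in \bigcap_{h_t' \in S_t(x)}\mathcal{A}^{\star}(h_t')$ using the hypothesis; for pairs with $S_t(x)$ empty, fix $a_{x,t}$ arbitrarily in $\mathcal{A}$. Define $\pi$ to play $a_{\phi(h_t),t}$ deterministically at each history $h_t$ of length less than $T$. Then $\pi$ satisfies~(\ref{eq:pphi}) by construction, so $\pi \in \Pi^{\phi}$, and at every attainable history $h_t$ of length less than $T$ we have $h_t \in S_t(\phi(h_t))$, so $a_{\phi(h_t),t} \in \mathcal{A}^{\star}(h_t)$, making $\pi$ optimal by Corollary~\ref{cor:opt-criteria}.

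The only subtle point is keeping the per-length bookkeeping straight: $\phi$ may collapse histories of different lengths into the same feature, but the feature-based property in Definition~\ref{defn:pphi} only demands consistency within a fixed length, and the intersection in~(\ref{eq:opt}) is likewise taken over histories of the same length $t$. Indexing the chosen action by the pair $(x,t)$ rather than by $x$ alone resolves this cleanly, so I do not anticipate a substantive obstacle beyond a careful application of Corollary~\ref{cor:opt-criteria} in each direction.
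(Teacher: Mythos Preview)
Your proposal is correct and follows essentially the same approach as the paper: both directions are reduced to Corollary~\ref{cor:opt-criteria}, with the forward direction noting that an optimal feature-based policy must have support in each $\mathcal{A}^{\star}(h_t')$ across the feature class, and the reverse direction constructing a policy that picks an action from the common intersection. Your treatment is in fact slightly more careful than the paper's, since you index the chosen action by the pair $(x,t)$ to ensure the constructed policy is genuinely feature-based; the paper's proof leaves this consistency implicit in the phrase ``by construction.''
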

\begin{proof}
    By part (iii) of Corollary~\ref{cor:opt-criteria} and the definition of
    $\Pi^{\phi}$, if $\pi$ is an optimal feature-based policy then for each attainable history
    $h_t$ of length less than $T$ and attainable $t$-history $h_t'$ in the preimage
    of $\phi(h_t)$, the distribution $\pi(\cdot  \vert h_t)$ must be supported by
    $\mathcal{A}^{\star}(h_t')$. Hence, the intersection in (\ref{eq:opt})
    must be non-empty. To show the converse, construct an optimal policy as follows:
for each attainable history $h_t$ of length less than $T$, let $\pi$ select an action in the
intersection of $\bigcap_{h_t'\in \phi^{-1}(\phi(h_t))\cap \mathcal{H}_t^{e}} \mathcal{A}^{\star}(h_t')$. Then
$\pi$ is optimal by Corollary~\ref{cor:opt-criteria} and is contained in
$\Pi^{\phi}$ by construction. 
\end{proof}

Roughly speaking, Proposition~\ref{prop:pi-phi} shows that when the goals of a Direct
Preference Process satisfy Assumption~\ref{a:1}, an optimal
feature-based policy exists if, and only if, for each attainable $t$-history
$h_t$, there is an action that is simultaneously optimal for every
attainable $t$-history in the preimage of $\phi(h_t)$.

\subsection{Embedded Preferences}
\label{sec:pref-embeddings}
Although Proposition~\ref{prop:pi-phi} gives both a necessary and sufficient condition
that answers (Q2), the condition is rather generic and it is hard to check whether
or not a system satisfies it. In this section we present Theorem~\ref{thm:markov},
which provides verifiable conditions to
answer (Q2). While not necessary, these conditions offer practical ways to ensure
that optimal feature-based policies exist. 
They rely on the following notion of weighted averages.

\begin{definition}[$(\phi,\gamma)$-Frequency]\label{def:discounted-freq} 
    Let $\phi:\mathcal{H}\to \mathcal{X}$ be a feature map and
    $(\gamma_t)_{t=1}^{T-1}$ be a sequence of non-negative numbers that are not all
    zero. Given two non-negative integers $t_1$ and $t_2$ such that $t_1\le t_2\le
    T$ and for which $\sum_{t=t_1}^{t_2-1}\gamma_t$ is non-zero, define the function
$f^{(\phi,\gamma)}_{t_1:t_2}:
\mathcal{X}\times
\mathcal{A} \times \text{Dist}(\Omega)\to [0,1]$ as 
\begin{equation}
    \label{eq:freq}
    f_{t_1:t_2}^{(\phi,\gamma)}(x,a \vert D):=
    \frac{1}{\sum_{t=t_1}^{t_2-1} \gamma_t}
    \sum_{t=t_1}^{t_2-1}\gamma_t \mathbb{P}_D((X_t,A_t)=(x,a)),
\end{equation} 
where $\mathbb{P}_{D}((X_t,A_t)=(x,a))$ is the probability that the feature-action
pair $(x,a)$ is visited at time $t$ under distribution $D$.
We say that $f_{t_1:t_2}^{(\phi,\gamma)}(x,a
\vert D)$ is the \textbf{$(\phi,\gamma)$-frequency of $(x,a)$ in distribution
$D$ in between $t_1$ and $t_2$}.
When $\sum_{t=t_1}^{t_2-1}\gamma_t=0$ we define
$f_{t_1:t_2}^{(\phi,\gamma)}(x,a \vert D)=0$. We abbreviate
$f_{0:T}^{(\phi,\gamma)}(x,a \vert D)$ to $f^{(\phi,\gamma)}(x,a \vert D)$.
\end{definition}

\textbf{Interpretation of $\gamma$}. The
$(\phi,\gamma)$-frequency is a weighted measure of how often
each feature-action pair is visited in a given distribution over $\Omega$. The
weights $(\gamma_t)_{t=1}^{T-1}$ measure the
importance of the time at which feature-action pairs are visited. For
instance, if $\gamma_t$ is equal to one for each
time $t$, the distribution $f^{(\phi,\gamma)}(\cdot  \vert D)$ measures the relative frequency of
feature-action pairs visited under $D$. If $\gamma_t= \alpha^t$ for some
positive number $\alpha$ less than one, $f^{(\phi,\gamma)}(\cdot  \vert D)$ measures the
$\alpha$-discounted frequency of feature-action pairs visited under $D$.

\begin{lemma}\label{lem:freq}
    When $\sum_{t=t_1}^{t_2-1}\gamma_t$ is non-zero the function $(x,a)\mapsto
    f_{t_1:t_2}^{(\phi,\gamma)}(x,a \vert D)$ defines a probability distribution over the
    set of feature-action pairs, which we denote by $f_{t_1:t_2}^{(\phi,\gamma)}(\cdot  \vert D)$. 
\end{lemma}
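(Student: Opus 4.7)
The plan is a direct verification of the two defining properties of a probability distribution: non-negativity and summing to one. By assumption $\sum_{t=t_1}^{t_2-1} \gamma_t > 0$ and each $\gamma_t \ge 0$, so the normalising factor is a positive real number and each summand in~(\ref{eq:freq}) is a non-negative multiple of a probability. Hence $f_{t_1:t_2}^{(\phi,\gamma)}(x,a \mid D) \ge 0$ for every feature-action pair $(x,a)$, which disposes of non-negativity.

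For total mass one, I would interchange the finite sums and use the fact that, for each fixed $t$ with $t_1 \le t < t_2 \le T$, the map $(x,a) \mapsto \mathbb{P}_D((X_t,A_t) = (x,a))$ is the joint marginal at time $t$ of the random variables $X_t = \phi(H_t)$ and $A_t$ under the trajectory distribution $D$. Since $\mathcal{X}$ and $\mathcal{A}$ are finite and $(X_t,A_t)$ must take some value in $\mathcal{X}\times\mathcal{A}$ at every step $t < T$, this marginal sums to one over $\mathcal{X}\times\mathcal{A}$. Therefore
\begin{equation*}
\sum_{(x,a)\in\mathcal{X}\times\mathcal{A}} f_{t_1:t_2}^{(\phi,\gamma)}(x,a \mid D)
= \frac{1}{\sum_{t=t_1}^{t_2-1}\gamma_t}\sum_{t=t_1}^{t_2-1} \gamma_t \cdot 1 = 1.
\end{equation*}

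There is no genuine obstacle here; the only point that requires a line of care is the Fubini-style swap of the two finite sums and the confirmation that the event $\{(X_t,A_t)=(x,a)\}$ is well-defined for every $t$ in the index range $t_1 \le t \le t_2 - 1 \le T-1$, which is exactly the range on which histories $H_t$ admit a next action $A_t$. Once those are stated, the computation above closes the proof.
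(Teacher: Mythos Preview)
Your proof is correct and is precisely the direct verification the paper has in mind: the paper's own proof reads ``Immediate from Definition~\ref{def:discounted-freq},'' and you have simply spelled out that immediacy by checking non-negativity and total mass one. There is no substantive difference in approach.
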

\begin{proof}
    Immediate from Definition~\ref{def:discounted-freq}.
\end{proof}

Using the $(\phi,\gamma)$-frequency map, we are now able to describe goals that ``only
depend'' on the weighted frequency of feature action pairs. 
\begin{definition}\label{defn:embed}
    Let $(\mathcal{O},\mathcal{A},T,e,\preceq)$ be a Direct Preference Process and
    let $\preceq_{\circ}$ be a binary relation on the set of distributions over
    $\mathcal{X}\times \mathcal{A}$. We
    say that $\preceq$ \textbf{preserves and reflects $\preceq_{\circ}$ via
    $(\phi,\gamma)$-frequency} if for any two distributions $A$ and $B$ over $\Omega$, 
   \begin{equation}
   \label{eq:embed}
   A \preceq B \iff f^{(\phi,\gamma)}(\cdot  \vert A) \preceq_{\circ}
   f^{(\phi,\gamma)}(\cdot  \vert B)
   .\end{equation} 
\end{definition}

We say that $\preceq$ \textbf{embeds into $\preceq_{\circ}$ via
$(\phi,\gamma)$-frequency} whenever $\preceq$ preserves and reflects
$\preceq_{\circ}$ via $(\phi,\gamma)$-frequency, despite the fact that the map
$D\mapsto f^{(\phi,\gamma)}(\cdot  \vert D)$ is neither injective nor
surjective, and thus not an order embedding. 

The next two examples show how the $(\phi,\gamma)$-frequency embedding is useful when preferences are given between observation-action pairs~\citep{Stiennon_2020learning} or trajectory segments~\citep{Christiano2017_RLHF,Kim2022_preference}. In these situations, we can use the $(\phi,\gamma)$-frequency map to define a preference relation on $\text{Dist}(\Omega)$ from the preference data.

\begin{example}[Preferences over Observation-Action Pairs]
Consider $v_1, v_2:\mathcal{O}\times\mathcal{A}\to [0,1]$ and define the relation $\preceq_{\circ}$ on $\text{Dist}(\mathcal{O}\times \mathcal{A})$ according to the Tie-breaking Criterion from Example 16. If $\phi$ maps each history to its most recent observation, then $\mathcal{X}=\mathcal{O}$ and $\preceq_{\circ}$ is an ordering on $\text{Dist}(\mathcal{X}\times\mathcal{A})$. For a sequence of positive weights $(\gamma_t)_{t=0}^{T-1}$, we can define a relation $\preceq$ on $\text{Dist}(\Omega)$ via Equation~\ref{eq:embed}. In this case, the preferences given by $\preceq$ depend only on the weighted frequency of observation-action pairs. In particular, distributions $A$ and $B$ over $\Omega$ are $\preceq$-equivalent if they visit all observation action pairs with the same weighted frequency.
\end{example}

\begin{example}[Preferences over Trajectory Segments] Let $k$ be a positive integer less than $T$.
    Suppose that preference data is available for histories of length up to $k$, giving rise to a binary relation
    $\preceq_{\circ}$ on the set of distributions
    over $\bigcup_{l=0}^k (\mathcal{H}_l\times \mathcal{A})$. With the feature map
    defined in Example~\ref{ex:traj-segments} and a sequence of non-negative numbers
    $\gamma=(\gamma_t)_{t=0}^{T-1}$, the goals of a
    Direct Preference Process can be defined using  $\preceq_{\circ}$ and Equation~\ref{eq:embed}.
\end{example}

Although Definition~\ref{defn:embed} ensures that the learning objectives are fully described by distributions over feature-action pairs, we require
additional assumptions on the transition dynamics of the environment to ensure that
optimal feature-based policies exist.

\begin{definition}A Direct Preference Process
    $(\mathcal{O},\mathcal{A},T,e,\preceq)$ and feature map $\phi$ satisfy the
    \textbf{Markov Feature Assumption} when the following two statements hold for each pair of 
    attainable $t$-histories $h_t,h_t'$ of length less than $T$:
    \begin{itemize}
        \item If $\phi(h_t)=\phi(h_t')$ then for each action $a$, $\rho(\cdot  \vert h_t,a)=\rho(\cdot  \vert h_t',a)$.
        \item If $\phi(h_t)=\phi(h_t')$ then for each action $a$ and
            observation $o$, $\phi(h_t\cdot (a,o))=\phi(h_t'\cdot(a,o))$.
    \end{itemize}
\end{definition}
The latter conveys the notion that if a feature accounts for all the retained
information in each history, then the feature in each history depends on its
sub-histories only through previous features. This is satisfied by the agents
considered in many popular agent
designs~\citep{Lu2023_BitbyBit,mnih2015humanlevel,Osband2016}.
Combined with Definition \ref{defn:embed}, the Markov Feature Assumption
guarantees that optimal feature-based policies exist.

\begin{theorem}
    \label{thm:markov}
    Let $(\mathcal{O}, \mathcal{A},T,e,\preceq)$ be a Direct Preference Process and
    $\preceq_{\circ}$ a total consistent preorder on
    $\text{Dist}(\mathcal{X}\times \mathcal{A})$ such that  $\preceq$ embeds into $\preceq_{\circ}$ 
    via $(\phi,\gamma)$-frequency.
    \begin{enumerate}
        \item Every policy $\pi$ that satisfies the following recursive relation for each attainable
    history  $h_t$ of length less than $T$ and action $a$ is a $\preceq$-optimal policy: 
    \begin{equation}
    \label{eq:future}
    f_{t:T}^{(\phi,\gamma)}(\cdot  \vert D^{\pi}(h_t)) \succeq_{\circ}
    f_{t:T}^{(\phi,\gamma)}(\cdot  \vert D^{\pi}(h_t\cdot a))
    .\end{equation}

        \item If the Markov Feature Assumption is satisfied then $\Pi^{\phi}$ 
            contains a $\preceq$-optimal policy.
    \end{enumerate}
\end{theorem}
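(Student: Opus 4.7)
For part (i), the plan is to invoke Theorem~\ref{thm:main}(ii) after verifying its hypothesis $D^{\pi}(h_t) \succeq D^{\pi}(h_t\cdot a)$ by combining the $(\phi,\gamma)$-frequency embedding with consistency of $\preceq_{\circ}$. For any attainable $h_t$ of length less than $T$ and any action $a$, I would split the full frequency into past and future parts,
\[
f^{(\phi,\gamma)}(\cdot\vert D) \;=\; \alpha\, f_{0:t}^{(\phi,\gamma)}(\cdot\vert D) \;+\; (1-\alpha)\, f_{t:T}^{(\phi,\gamma)}(\cdot\vert D),\qquad \alpha \;=\; \frac{\sum_{s=0}^{t-1}\gamma_s}{\sum_{s=0}^{T-1}\gamma_s}.
\]
Whether $D$ equals $D^{\pi}(h_t)$ or $D^{\pi}(h_t\cdot a)$, the past component is determined deterministically by $h_t$ and is therefore the same in both cases. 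Treating this shared past as the ``mixed-in'' distribution, consistency of $\preceq_{\circ}$ lifts the hypothesized future inequality to the inequality $f^{(\phi,\gamma)}(\cdot\vert D^{\pi}(h_t))\succeq_{\circ} f^{(\phi,\gamma)}(\cdot\vert D^{\pi}(h_t\cdot a))$, and the embedding gives $D^{\pi}(h_t)\succeq D^{\pi}(h_t\cdot a)$. The degenerate cases $\alpha=0$ or $\alpha=1$ are handled separately, since in each case one side of the split vanishes or the two full frequencies coincide outright.

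For part (ii), I would construct a deterministic feature-based policy by backward induction on $t$ and then invoke part (i). The Markov Feature Assumption is the essential input: by induction on $s\ge t$, for any feature-based $\pi$ the joint distribution over future feature-action pairs $(X_s,A_s)_{s\ge t}$ depends on $h_t$ only through $\phi(h_t)$, so $f_{t:T}^{(\phi,\gamma)}(\cdot\vert D^{\pi}(h_t\cdot a))$ is well-defined as a function of $(t,\phi(h_t),a,\pi_{>t})$. Proceeding from $t=T-1$ down to $t=0$, at each feature $x$ reachable at time $t$ I would pick an action $a_t^{\star}(x)$ whose associated future frequency is a $\preceq_{\circ}$-greatest element of the finite family $\{f_{t:T}^{(\phi,\gamma)}(\cdot\vert D^{\pi}(h\cdot a)):a\in\mathcal{A}\}$; such a greatest element exists because $\preceq_{\circ}$ is a total preorder and $\mathcal{A}$ is finite. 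Setting $\pi(\cdot\vert h_t)$ to the Dirac distribution at $a_t^{\star}(\phi(h_t))$ puts $\pi$ in $\Pi^{\phi}$ by construction, and because $\pi$ is Dirac at $h_t$, equation~(\ref{eq:dpi}b) gives $D^{\pi}(h_t)=D^{\pi}(h_t\cdot a_t^{\star}(\phi(h_t)))$, so the recursive hypothesis of part (i) holds. Part (i) then concludes that $\pi$ is $\preceq$-optimal.

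The main obstacle I anticipate is the bookkeeping around the past/future decomposition and identifying exactly where the Markov Feature Assumption enters. The leverage in part (i) is that the past frequency is a common mix-in that does not depend on the chosen action, which is precisely what consistency of $\preceq_{\circ}$ is designed to exploit. The leverage in part (ii) is that the Markov Feature Assumption collapses what would otherwise be a dependence on the full history into a dependence on the feature, so the backward construction is well-posed at the level of features. An alternative route for part (ii) is to combine Proposition~\ref{prop:pi-phi} with the observation that $\mathcal{A}^{\star}(h_t)=\mathcal{A}^{\star}(h_t')$ whenever $\phi(h_t)=\phi(h_t')$ and both are attainable (which itself follows from the past/future argument of part (i)), but the backward-induction route above is more direct and produces a concrete optimal policy.
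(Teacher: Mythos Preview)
Your treatment of part (i) is essentially the paper's: the paper packages the past/future split and the consistency step as an auxiliary fact (F1), namely $\mathcal{F}_{\pi}^{\star}(h_t)\subseteq\mathcal{A}_{\pi}^{\star}(h_t)$, but the underlying computation is identical to yours. One small omission worth making explicit: to invoke Theorem~\ref{thm:main}(ii) you need $\preceq$ restricted to $\text{Dist}(\Omega^{e})$ to be a total consistent preorder; this follows because $D\mapsto f^{(\phi,\gamma)}(\cdot\vert D)$ is affine and $\preceq_{\circ}$ is a total consistent preorder, and the paper likewise asserts it without proof.

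Your route for part (ii) is genuinely different from the paper's. The paper proves a second fact (F2) --- that for any \emph{optimal} policy $\pi$ and any attainable same-feature $t$-histories $h_t,h_t'$, the future frequencies $f_{t:T}^{(\phi,\gamma)}(\cdot\vert D^{\pi}(h_t\cdot a))$ and $f_{t:T}^{(\phi,\gamma)}(\cdot\vert D^{\pi}(h_t'\cdot a))$ are $\preceq_{\circ}$-equivalent, whence $\mathcal{F}_{\pi}^{\star}(h_t)=\mathcal{F}_{\pi}^{\star}(h_t')$ --- and then combines (F1), (F2) and Proposition~\ref{prop:pi-phi}. You instead construct the feature-based policy directly by backward induction and verify the hypothesis of part (i). Your key lemma (exact equality of future frequencies for \emph{feature-based} policies under the Markov Feature Assumption) differs from the paper's (equivalence for \emph{optimal} policies), but both rest on the same inductive use of the Markov Feature Assumption. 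Your approach is more constructive and avoids the detour through Proposition~\ref{prop:pi-phi}; the paper's approach has the merit of tying the result explicitly to the general criterion in Proposition~\ref{prop:pi-phi}. One caveat on your aside about the alternative route: consistency (without convexity) only yields the inclusion $\mathcal{F}_{\pi}^{\star}(h_t)\subseteq\mathcal{A}^{\star}(h_t)$, so what actually follows is that the intersection $\bigcap_{h_t'}\mathcal{A}^{\star}(h_t')$ is non-empty (enough for Proposition~\ref{prop:pi-phi}), not the stronger claim $\mathcal{A}^{\star}(h_t)=\mathcal{A}^{\star}(h_t')$ that you state.
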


To prove Theorem~\ref{thm:markov} we introduce some more notation. First, let
$\Gamma_{t_1:t_2}=\sum_{t=t_1}^{t_2-1} \gamma_t$. Second, given a set of
distributions $\mathcal{D}= \{D(a'):\; a'\in \mathcal{A}\}$ parameterized by the set
of actions $\mathcal{A}$, we define $\text{arglub}_{\preceq}\mathcal{D}$ as the set of
actions $a$ for  which $D(a)$ is a least upper bound for $\mathcal{D}$ (with respect
to $\preceq$). Note that such least upper bounds exist when $\mathcal{A}$ is finite and the restriction of $\preceq$ onto $\mathcal{D}$ is a total preorder. Lastly, for
each attainable history $h_t$ of length less than $T$ and policy $\pi$, when $\Gamma_{t_1:T}$ is
nonzero we define
$\mathcal{F}_{\pi}^{\star}(h_t)$ as the set of actions which lead to the best
distribution over future feature-action pairs, 
\begin{equation}
    \label{eq:opt-future-dist}
    \mathcal{F}_{\pi}^{\star}(h_t)= \text{arglub}_{\preceq_{\circ}} \{f_{t:T}^{(\phi,\gamma)}(\cdot  \vert
    D^{\pi}(h_t\cdot a)):\; a\in \mathcal{A}\}
.\end{equation} 
When $\Gamma_{t_1:T}$ is equal to zero we let $\mathcal{F}_{\pi}^{\star}(h_t)=\mathcal{A}$. Notice here that we have suppressed the dependence of
$\mathcal{F}_{\pi}^{\star}(h_t)$ on $\phi$ and $\gamma$ to keep notation light. However, it is important to remember that $\mathcal{F}_{\pi}^{\star}(h_t)$ depends both
on $\phi$ and $\gamma$. 

\begin{proof}\textbf{(of Theorem~\ref{thm:markov})}
Theorem~\ref{thm:markov} follows quickly from the fact that for any attainable $t$-history $h_t$ of length less than $T$, both the following
    statements are true: 
    \begin{itemize}
        \item[(F1)] For each policy $\pi$, $\mathcal{F}_{\pi}^{\star}(h_t)$ is a non-empty subset of
            $\mathcal{A}_{\pi}^{\star}(h_t)$.
        \item[(F2)] If the Markov Feature Assumption is satisfied then for any optimal policy
            $\pi$ and attainable $t$-history $h_t'$ in the preimage of $\phi(h_t)$,
            $\mathcal{F}_{\pi}^{\star}(h_t)$ is equal to
            $\mathcal{F}_{\pi}^{\star}(h_t')$. 
    \end{itemize}

The proofs of (F1) and (F2) are left to Appendix~\ref{ap:thm-32}. At a high level, (F1) follows from the definitions of $(\phi,\gamma)$-frequency (Definition~\ref{def:discounted-freq}) and of $\mathcal{F}_{\pi}^{\star}(h_t)$ in~(\ref{eq:opt-future-dist}). (F2) holds by an induction argument similar to the one given in Theorem~\ref{thm:main}.

    \textit{Proof of (i)}. Since $\preceq_{\circ}$ is a total consistent preorder, Equation~\ref{eq:future} holds if
and only if $f_{t:T}^{(\phi,\gamma)}(\cdot  \vert D^{\pi}(h_t))$ is supported by
$\mathcal{F}_{\pi}^{\star}(h_t)$. Therefore, by (F1) if
Equation~\ref{eq:future} holds
for each attainable history $h_t$ of length less than $T$ then $\pi(\cdot  \vert
h_t)$ is supported by $\mathcal{A}_{\pi}^{\star}(h_t)$ for each attainable history $h_t$ of length less than $T$. It is easy to check that the relation $\preceq$ is a total consistent preorder
since it embeds into $\preceq_{\circ}$ via $(\phi,\gamma)$-frequency and thus we may use the third criterion for
optimality in Corollary~\ref{cor:opt-criteria} to see that $\pi$ is an optimal policy.

\textit{Proof of (ii)}. If both (F1) and (F2) are true then for any optimal policy $\pi$
and attainable
history $h_t$ of length less than $T$, 
\begin{equation}
    \label{eq:s}
    F_{\pi}^{\star}(h_t) \subseteq \bigcap_{h_t'\in \phi^{-1}(\phi(h_t))\cap
    \mathcal{H}_t^{e}} \mathcal{A}^{\star}(h_t)
.\end{equation} 
In particular, for each attainable history $h_t$ of length less than $T$ the intersection on the right hand side of
(\ref{eq:s}) is non-empty for each attainable history
$h_t$ of length less than $T$. So $\Pi^{\phi}$ contains an optimal policy by
Proposition~\ref{prop:pi-phi}. 
\end{proof}

The first part of Theorem~\ref{thm:markov} shows that if the goals of a Direct
Preference Process are preserved and reflected by a total consistent preorder on the set of distributions over
feature-action pairs, then a policy is optimal whenever it achieves the most
desirable distribution over future feature-action pairs in every starting history.
However, without any assumptions on the environment's transition dynamics, a
feature-based policy may not satisfy the conditions in part (i). 
The Markov Feature Assumption is a strong assumption and relaxing these conditions
is an important area for future work. When the Markov Feature Assumption does not
hold, Proposition~\ref{prop:pi-phi} provides an alternative means to
guarantee the existence of feature-based policies.

\subsection{Connection to Markov Rewards}
We introduced the embedding of preferences via $(\phi,\gamma)$-frequency as an abstract
property that might help to specify the goals of a Direct Preference Process. 
Example~\ref{ex:traj-segments} demonstrates the practical utility of this
property when preference data is collected on trajectory segments
rather than full-length trajectories. Nevertheless,  some readers may be
hesitant about its justification. To address these doubts, our final result shows that the $(\phi,\gamma)$-frequency embedding is
implied by any objective defined by Markov rewards.
\begin{theorem}
    \label{thm:markov-reward}
    Let $(\mathcal{O},\mathcal{A},T,e, \preceq)$ be Direct Preference Process, $\phi:
    \mathcal{H}\to \mathcal{X}$ be a feature map and $(\gamma_{t})_{t=1}^{T-1}$ be a
    sequence of non-negative numbers that are not all zero. The following two statements are
    equivalent:
    \begin{enumerate}
            \item $\preceq$ embeds into a total convex preorder $\preceq_{\circ}$ that satisfies interpolation via
            $(\phi,\gamma)$-frequency.
        \item There is a reward function $r:\mathcal{X}\times \mathcal{A}\to
            \mathbb{R}$ such that for any two distributions $D$ and $D'$ over
            $\Omega$, $D \preceq D'$ if and only if \[
                \mathbb{E}_{D} \left[ \sum_{t=1}^{T-1} \gamma_t r(X_t,A_t) \right] \le \mathbb{E}_{D'}\left[ \sum_{t=1}^{T-1} \gamma_t r(X_t,A_t) \right]
            .\] 
    \end{enumerate}
\end{theorem}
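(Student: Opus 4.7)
The plan is to prove this theorem as a direct consequence of the vNM Expected Utility Theorem (Theorem~\ref{thm:vnm}) applied to the finite set $\mathcal{X}\times\mathcal{A}$, combined with a linearity-of-expectation identity that swaps the summation over feature--action pairs with the summation over timesteps.

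For the direction (i) $\Rightarrow$ (ii), since $\preceq_{\circ}$ is by hypothesis a total convex preorder on $\text{Dist}(\mathcal{X}\times\mathcal{A})$ satisfying interpolation, the general form of Theorem~\ref{thm:vnm} (stated there with the remark that $\Omega$ may be replaced with any finite set) yields a function $r:\mathcal{X}\times\mathcal{A}\to\mathbb{R}$ such that for any two distributions $P, Q$ on $\mathcal{X}\times\mathcal{A}$, $P \preceq_{\circ} Q$ if and only if $\sum_{(x,a)} P(x,a)\, r(x,a) \le \sum_{(x,a)} Q(x,a)\, r(x,a)$. For any distribution $D$ over $\Omega$, substituting $P = f^{(\phi,\gamma)}(\cdot \vert D)$, expanding Definition~\ref{def:discounted-freq}, and interchanging the two finite sums gives
\[
\sum_{(x,a)} f^{(\phi,\gamma)}(x,a \vert D)\, r(x,a) \;=\; \frac{1}{\Gamma_{0:T}} \, \mathbb{E}_D\!\left[\sum_{t=1}^{T-1} \gamma_t\, r(X_t, A_t)\right],
\]
where we adopt the natural convention $\gamma_0 = 0$ consistent with the indexing $(\gamma_t)_{t=1}^{T-1}$. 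Since $\Gamma_{0:T} > 0$ by the assumption that the $\gamma_t$ are not all zero, the ordering of the weighted sums appearing in (ii) agrees with the ordering of the two expected values under $\preceq_{\circ}$, and the $(\phi,\gamma)$-embedding hypothesis then delivers (ii).

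For the reverse direction (ii) $\Rightarrow$ (i), I would define $\preceq_{\circ}$ on $\text{Dist}(\mathcal{X}\times\mathcal{A})$ by $P \preceq_{\circ} Q$ if and only if $\sum_{(x,a)} P(x,a)\, r(x,a) \le \sum_{(x,a)} Q(x,a)\, r(x,a)$. Totality and transitivity are inherited from $\le$ on $\mathbb{R}$, convexity is immediate from the linearity of expectation in the mixing parameter, and interpolation follows either by a direct choice of $\alpha = (\mathbb{E}_R[r] - \mathbb{E}_Q[r])/(\mathbb{E}_R[r] - \mathbb{E}_P[r])$ whenever $\mathbb{E}_P[r] \neq \mathbb{E}_R[r]$ (and any $\alpha$ otherwise), or by invoking Theorem~\ref{thm:vnm} in the opposite direction. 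The same interchange-of-sums identity used above then shows that $D \preceq D'$ (as specified by the expected weighted reward criterion) is equivalent to $f^{(\phi,\gamma)}(\cdot \vert D) \preceq_{\circ} f^{(\phi,\gamma)}(\cdot \vert D')$, which is exactly the $(\phi,\gamma)$-frequency embedding required in (i).

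There is no substantive obstacle; the argument is essentially a packaging of vNM together with a Fubini-style swap of two finite sums. The two minor points to handle with care are the convention $\gamma_0 = 0$ needed to align the index range of $f^{(\phi,\gamma)}$ with the summation in (ii), and the fact, already flagged in the discussion after Definition~\ref{defn:embed}, that the map $D \mapsto f^{(\phi,\gamma)}(\cdot \vert D)$ is neither injective nor surjective. The latter does not obstruct the argument because Theorem~\ref{thm:vnm} provides a representation of $\preceq_{\circ}$ on all of $\text{Dist}(\mathcal{X}\times\mathcal{A})$, and in particular at every point lying in the image of the frequency map.
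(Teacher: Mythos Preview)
Your proposal is correct and follows essentially the same route as the paper: both directions hinge on applying the vNM Expected Utility Theorem to $\text{Dist}(\mathcal{X}\times\mathcal{A})$ together with the identity $\sum_{x,a} f^{(\phi,\gamma)}(x,a\mid D)\,r(x,a) = \Gamma_{0:T}^{-1}\,\mathbb{E}_D\!\left[\sum_t \gamma_t\, r(X_t,A_t)\right]$ obtained by swapping the two finite sums. Your remarks about the $\gamma_0=0$ convention and the non-injectivity of the frequency map are accurate and the paper treats these points implicitly.
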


\begin{proof} 
    \textit{(i) $\implies$ (ii)}. By the definition of the $(\phi,\gamma)$-frequency embedding (Definition~\ref{defn:embed}) and the vNM Expected Utility Theorem (Theorem~\ref{thm:vnm}), there is a function $r:\mathcal{X}\times \mathcal{A}\to \mathbb{R}$ such that for any two distributions $D$ and $D'$ over $\Omega$, $D\preceq D'$ if and only if
     \begin{equation*}
    \label{eq:s_2}
    \sum_{x,a}f^{(\phi,\gamma)}(x,a  \vert D)r(x,a)\le\sum_{x,a}f^{(\phi,\gamma)}(x,a \vert D')r(x,a)
    .\end{equation*}
    It is also easy to check that for any distribution $D$ over $\Omega$,
    \begin{equation}
    \label{eq:s_3}
    \sum_{x,a}
f^{(\phi,\gamma)}(x,a \vert D)r(x,a)= \frac{1}{\sum_{t=1}^{T-1} \gamma_t} \mathbb{E}_{D} \left[ \sum_{t=1}^{T-1} \gamma_t r(X_t,A_t) \right]
    .\end{equation}
    So for any two distributions $D$ and $D'$ over $\Omega$, $D\preceq D'$ if and only if
    \begin{align*}
     \frac{1}{\sum_{t=1}^{T-1} \gamma_t} \mathbb{E}_{D}\left[
                    \sum_{t=1}^{T-1} \gamma_t r(X_t,A_t) \right]\le \frac{1}{\sum_{t=1}^{T-1} \gamma_t} \mathbb{E}_{D'}\left[ \sum_{t=1}^{T-1} \gamma_t r(X_t,A_t) \right]
    .\end{align*}
    Statement \textit{(ii)} then follows by multiplying both sides of the above inequality by $\sum_{t=1}^{T-1}\gamma_t$.

    \textit{(ii) $\implies$ (i)}. Define a binary relation $\preceq_{\circ}$ on
    the set of distributions over feature-action pairs for any two
    distributions $\mu_1,\mu_2$ as:
    \begin{equation*}
        \mu_1 \preceq_{\circ} \mu_2 \iff \sum_{x,a}\mu_1(x,a)r(x,a)\le \sum_{x,a}\mu_2(x,a)r(x,a)
    .\end{equation*} 
    By the vNM Expected Utility Theorem, $\preceq_{\circ}$ is a total convex preorder satisfying interpolation. From the assumption in part (ii) and Equation~\ref{eq:s_3}, it follows that for any two distributions distributions $D$ and $D'$ over $\Omega$, $D\preceq D'$ if and only if
    \begin{align*}
       \sum_{x,a} f^{(\phi,\gamma)}(x,a \vert D)r(x,a)\le \sum_{x,a}f^{(\phi,\gamma)}(x,a \vert D')r(x,a)
    .\end{align*}
    Therefore, $\preceq$ must embed into $\preceq_{\circ}$ via $(\phi,\gamma)$-frequency.
\end{proof}

It is interesting to compare Theorem~\ref{thm:markov-reward} with vNM's Expected Utility Theorem. If the goals are
expressed by a feature-action reward function, as opposed to a history-based reward,
then $\preceq$ embeds into an underlying feature-action
preference  via
$(\phi,\gamma)$-frequency. However, the feature-action reward
function which expressed $\preceq$ may not be unique, as multiple feature-action preferences could preserve and
reflect $\preceq$. This result complements previous work on Markov reward expressiveness in finite MDPs~\citep{Abel2021_expressivity,Skalse2023_reward_limitations}.

\section{Conclusion}
\label{sec:conclusion}

We introduced the Direct Preference Process, a model of preference-based learning in
partially-observable, non-Markovian environments. Unlike previous work, we
did not assume that preferences were generated by an underlying reward
function. Instead we used conditions on the ordinal structure of the preferences to
guarantee the existence of optimal policies. 
We showed that it is possible for
an agent to behave optimally with respect to a given set of
preferences even when there is no corresponding reward function that captures the same learning goal. Lastly, we
provided two results to determine when it is possible for a
computationally-constrained agent to behave optimally, as well as a characterization
of goals expressed by Markov rewards.

The Direct Preference Process opens up many interesting avenues for future work.
An extension of this framework for infinite observation and action sets is important for preference-based robotics tasks. For practitioners, it is
interesting to study whether  agents can perform well without learning a reward model. Recent findings~\citep{Rafailov2023direct,An2023_DPPO,Kang2023_oppo}
have shown that this may be the case. Moreover, the notion of
a $(\phi,\gamma)$-frequency embedding could be used to \textit{derive} relevant
features for a preference-based decision problem. Finally, it would be very useful to study the hardness of learning feature maps. We suspect that this may highlight differences between reward-based vs purely preference-based agents.


\subsection*{Acknowledgements}
We thank the reviewers and members of the RL Lab for their helpful feedback. JCC is grateful for the support of an NSERC CGS-M scholarship, PP is grateful for the support of an NSERC research grant and DP is grateful for funding from NSERC and CIFAR.

\bibliography{references}

\appendix

\section{Missing Proofs}

\subsection{Proof of Lemma~\ref{lemma:stat1}}
\label{ap:lem-16}
\textbf{Lemma 16.} 
Let $\alpha_1,\ldots, \alpha_n$ be $n$ non-negative numbers that sum to one and $A_1,\ldots, A_n$, $B_1,\ldots, B_n$ be distributions over a finite set $X$.
If $\preceq$ is a total consistent preorder on $\text{Dist}(X)$ and $A_i\preceq B_i$ for each positive integer $i$ less than $n$, then
\begin{equation*}
\sum_{i=1}^{n} \alpha_i A_i\preceq \sum_{i=1}^{n} \alpha_i B_i
.\end{equation*} 

\begin{proof}
The case when one of the $\alpha_i$'s is equal to one is trivial, so assume
otherwise. Then for each $i$, the number $\overline{\alpha}_i:=1-\alpha_i$ is
positive, the sum $\sum_{j\neq i}\frac{\alpha_j}{\overline{\alpha}_i}$ is equal to $1$ and
for any distributions $C_1,\ldots,
C_n$, the distribution $\sum_{i=1}^{n} \alpha_i C_i$ can be re-written as:
\begin{equation}
\label{ap-eq:lemma-fact1}
\sum_{j=1}^n \alpha_j C_j= \alpha_i C_i + \overline{\alpha}_i \left( \sum_{j\neq i}
\frac{\alpha_j}{\overline{\alpha}_i}C_j\right)
.\end{equation}
To prove the lemma, we will show by induction that for every non-negative integer $k$ less than or equal to
$n$, 
\begin{equation}
\label{ap-eq:ind}
\sum_{i=1}^n A_i \preceq \sum_{i=1}^k \alpha_iB_i+ \sum_{i=k+1}^n \alpha_i A_i
.\end{equation}
The statement in the lemma corresponds to the case when $k$ is equal to $n$.
Equation~\ref{ap-eq:ind} holds when $k$ is equal to zero since $\preceq$ is reflexive.
Assuming now
that~(\ref{ap-eq:ind}) holds for a fixed non-negative integer $k$ less than $n$,
\begin{align*}
   \sum_{i=1}^{n} \alpha_i A_i &\preceq \sum_{i=1}^{k} \alpha_i B_i +
   \sum_{i=k+1}^{n} \alpha_i A_i\\
           &= \alpha_{k+1}A_{k+1}+ \overline{\alpha}_{k+1} \left( \sum_{i=1}^{k} \frac{\alpha_i}{\overline{\alpha}_{k+1}}B_i+ \sum_{i=k+2}^{n} \frac{\alpha_i}{\overline{\alpha}_{k+1}} A_i \right)\\
           &\preceq \alpha_{k+1}B_{k+1}+ \overline{\alpha}_{k+1} \left( \sum_{i=1}^{k} \frac{\alpha_i}{\overline{\alpha}_{k+1}}B_i+ \sum_{i=k+2}^{n} \frac{\alpha_i}{\overline{\alpha}_{k+1}} A_i \right)\\
           &= \sum_{i=1}^{k+1} \alpha_i B_i+ \sum_{i=k+2}^{n} \alpha_i A_i
.\end{align*}
The first line follows from our inductive assumption. The second and fourth lines follow from (\ref{ap-eq:lemma-fact1}). The third line follows from
the consistency of $\preceq$. This concludes the induction step since $\preceq$ is transitive.
\end{proof}

\subsection{Proofs of (F1) and (F2) from Theorem~\ref{thm:markov}}
\label{ap:thm-32}

To complete the proof of Theorem~\ref{thm:markov}, it remains to prove (F1) and (F2).

\textbf{F1.} Let $(\mathcal{O},\mathcal{A},T,e,\preceq)$ be Direct Preference Process and
    $\preceq_{\circ}$ a total consistent preorder on
    $\text{Dist}(\mathcal{X}\times \mathcal{A})$ such that  $\preceq$ embeds into $\preceq_{\circ}$ 
    via $(\phi,\gamma)$-frequency. For each attainable history $h_t$ of length less than $T$ and policy $\pi$, $\mathcal{F}_{\pi}^{\star}(h_t)$ is a non-empty subset of $\mathcal{A}_{\pi}^{\star}(h_t)$.

\begin{proof} $\mathcal{F}_{\pi}^{\star}(h_t)$ is non-empty since $\mathcal{A}$ is
a non-empty finite set and
$\preceq_{\circ}$ is a total preorder. For every distribution $D$ over $\Omega$, the $(\phi,\gamma)$-frequency of
feature-action pairs in between $t_1$ and $t_2$ under $D$ can be decomposed as:
   \begin{equation}
       \label{eq:freq-decomposition}
       f_{t_1:t_2}^{(\phi,\gamma)}(\cdot  \vert D)=
       \frac{\Gamma_{t_1:t}}{\Gamma_{t_1:t_2}}f_{t_1:t}^{(\phi,\gamma)} (\cdot  \vert
       D)+ \frac{\Gamma_{t:t_2}}{\Gamma_{t_1:t_2}}f_{t:t_2}^{(\phi,\gamma)}(\cdot  \vert D)
   ,\end{equation} 
   with the convention that $f_{t_1:t_1}^{(\phi,\gamma)}(\cdot  \vert D)$ is equal to
   zero. Let $h_t$ be a history of length less than $T$ and $\pi$ be an arbitrary
   policy. If $\Gamma_{t_1:T}$ is equal to zero then $\mathcal{A}_{\pi}^{\star}(h_t)$ is
   equal to $\mathcal{A}$ (hence $\mathcal{F}_{\pi}^{\star}(h_t)$) since the
   feature-action pairs visited after time $t$ do not contribute to
   $(\phi,\gamma)$-frequency. Otherwise, if $\Gamma_{t_1:T}$ is non-zero then for each action $a$,
\begin{subequations}
    \label{eq:arglub}
\begin{align}
    a&\in \mathcal{F}_{\pi}^{\star}(h_t)\\
    \iff a&\in \text{arglub}_{\preceq_{\circ}} \{f^{(\phi,\gamma)}_{t:T}(\cdot  \vert
    D^{\pi}(h_t,a')):\; a'\in \mathcal{A}\}\\
    \implies a&\in \text{arglub}_{\preceq_{\circ}} \{
        \frac{\Gamma_{0:t}}{\Gamma_{0:T}}f_{0:t}^{(\phi,\gamma)}(\cdot  \vert
        D^{\pi}(h_t,a'))+
        \frac{\Gamma_{t:T}}{\Gamma_{0:T}}f_{t:T}^{(\phi,\gamma)}(\cdot  \vert
        D^{\pi}(h_t,a')):\; a'\in \mathcal{A}\}\\
        \iff a&\in \text{arglub}_{\preceq_{\circ}} \left\{
        f^{(\phi,\gamma)}(\cdot  \vert D^{\pi}(h_t,a')):\; a'\in \mathcal{A}
    \right\}\\
            \iff a&\in \text{arglub}_{\preceq} \{D^{\pi}(h_t,a'):\; a'\in
            \mathcal{A}\}\\
                \iff a&\in \mathcal{A}_{\pi}^{\star}(h_t) 
.\end{align}
\end{subequations}
In going to the second line we have re-written the definition of
$\mathcal{F}_{\pi}^{\star}(h_t)$. In the third line we have used the fact that
$\preceq_{\circ}$ is consistent and 
that for any actions $a$ and $a'$, the distributions $f_{0:t}^{(\phi,\gamma)}(\cdot  \vert
D^{\pi}(h_t\cdot a))$ and $f_{0:t}^{(\phi,\gamma)}(\cdot  \vert
D^{\pi}(h_t\cdot a'))$ are equal, since the frequency of feature-action pairs in
between $0$ and $t$ depends only on $h_t$. In the fourth line, we have re-written each distribution in the set using
Equation~\ref{eq:freq-decomposition}. The fifth line follows from the fact that
$\preceq$ embeds into $\preceq_{\circ}$ via $(\phi,\gamma)$-frequencies and
the sixth line follows by definition of $\mathcal{A}_{\pi}^{\star}(h_t)$ in
Definition~\ref{def:Api}.
\end{proof}

\textbf{F2.} Let $(\mathcal{O},\mathcal{A},T,e,\preceq)$ be Direct Preference Process and
    $\preceq_{\circ}$ a total consistent preorder on
    $\text{Dist}(\mathcal{X}\times \mathcal{A})$ such that  $\preceq$ embeds into $\preceq_{\circ}$ 
    via $(\phi,\gamma)$-frequency.
    If the Markov Feature Assumption is satisfied then for any optimal policy $\pi$ and any two attainable $t$-histories $h_t,h_t'$ of length less than $T$, if $\phi(h_t)$ is equal to $\phi(h_t')$ then
            $\mathcal{F}_{\pi}^{\star}(h_t)$ is equal to
            $\mathcal{F}_{\pi}^{\star}(h_t')$. 

\begin{proof}We show by induction that the following stronger statement holds for
each non-negative integer $t$ less than $T$:

\textit{(P) For any optimal policy $\pi$ and attainable histories $h_t,h_t'$ that
    map to the same feature state, $\mathcal{F}_{\pi}^{\star}(h_t)$ is equal to
$\mathcal{F}_{\pi}^{\star}(h_t')$ and for any action $a$, $f_{t:T}^{(\phi,\gamma)}(\cdot  \vert
D^{\pi}(h_t\cdot a))$ is equal or $\preceq_{\circ}$-equivalent to $f_{t:T}^{(\phi,\gamma)}(\cdot
\vert D^{\pi}(h_t'\cdot a))$.}

Note that equality does
not imply $\preceq_{\circ}$-equivalence when $\Gamma_{t:T}$ is equal to zero, since $f_{t:T}^{(\phi,\gamma)}(\cdot  \vert
D^{\pi}(h_t\cdot a))$ is not defined as a distribution over feature-action pairs in
this case.

\textit{Base case: $t=T-1$}. Let $\pi$ be an optimal policy and $h_{T-1},h_{T-1}'$ be two
attainable histories such that $\phi(h_{T-1})$ is equal to $\phi(h_{T-1}')$. If $\gamma_{T-1}$ is equal
to zero then both $\mathcal{F}_{\pi}^{\star}(h_{T-1})$ and
$\mathcal{F}_{\pi}^{\star}(h_{T-1}')$ are equal to $\mathcal{A}$, and for each
action $a$, $f_{T-1:T}^{(\phi,\gamma)}(\cdot  \vert D^{\pi}(h_t\cdot a))$ is equal
to $f_{T-1:T}^{(\phi,\gamma)}(\cdot  \vert D^{\pi}(h_t'\cdot a))$. Otherwise, if
$\gamma_{T-1:T}$ is non-zero it follows
from the definition of $(\phi,\gamma)$-frequency
(Definition~\ref{def:discounted-freq}) that
both $\mathcal{F}_{\pi}^{\star}(h_{T-1})$ and $\mathcal{F}_{\pi}^{\star}(h_{T-1}')$ are
equal to the set \[
    \text{arglub}_{\preceq_{\circ}} \{\delta(\phi(h_{T-1}),a'):\; a'\in \mathcal{A}\}
,\] 
where $\delta(\phi(h_{T-1}),a')$ is the Dirac distribution over $\mathcal{X}\times \mathcal{A}$
concentrated at $(\phi(h_{T-1}),a')$.

\textit{Induction step.} For a fixed non-negative integer $t$ less than $T-1$,
assume that (P) holds at time  $t+1$. Let $\pi$ be an optimal policy and $h_t,h_t'$
be two attainable histories such that $\phi(h_t)$ is equal to $\phi(h_t')$. If $\Gamma_{t_1:T}$
is equal to zero then both $\mathcal{F}_{\pi}^{\star}(h_t)$ and
$\mathcal{F}_{\pi}^{\star}(h_t')$ are equal to $\mathcal{A}$. When $\Gamma_{t_1:T}$ is
non-zero, by the Markov Feature Assumption we have that $\phi(h_t\cdot (a,o))$ is equal to $\phi(h_t'\cdot (a,o))$
for each action-observation pair $(a,o)$. Thus, it follows from our inductive assumption
that if $h_t\cdot (a,o)$ is attainable and $\gamma_{t+1:T}$ is non-zero, then
\begin{equation}
    \label{eq:ia}
    f_{t+1:T}^{(\phi,\gamma)}(\cdot  \vert D^{\pi}(h_t\cdot
    (a,o)))\sim_{\circ} f_{t+1:T}^{(\phi,\gamma)}(\cdot  \vert
    D^{\pi}(h_t'\cdot (a,o)))
.\end{equation} 
Therefore, for each action $a$, 
\begin{align*}
    f_{t:T}^{(\phi,\gamma)}(\cdot  \vert D^{\pi}(h_t\cdot a))&=
    \frac{\gamma_t}{\Gamma_{t:T}}\delta(\phi(h_t),a)+
    \frac{\Gamma_{t+1:T}}{\Gamma_{t:T}}\sum_{o\in \mathcal{O}} \rho(o \vert h_t,a) f_{t+1:T}^{(\phi,\gamma)} (\cdot
    \vert D^{\pi}(h_t\cdot (a,o)))\\
&=
\frac{\gamma_{t}}{\Gamma_{t:T}}\delta(\phi(h_t'),a)+\frac{\Gamma_{t+1:T}}{\Gamma_{t:T}} \sum_{o\in \mathcal{O}}\rho(o \vert h_t',a) f_{t+1:T}^{(\phi,\gamma)}(\cdot \vert D^{\pi}(h_t\cdot (a,o)))\\
&\sim_{\circ}
\frac{\gamma_t}{\Gamma_{t:T}}\delta(\phi(h_t'),a)+\frac{\Gamma_{t+1:T}}{\Gamma_{t:T}}\sum_{o\in \mathcal{O}}\rho(o \vert h_t',a)f_{t:T}(\cdot  \vert D^{\pi}(h_t'\cdot (a,o)))\\
&= f_{t:T}^{(\phi,\gamma)}(\cdot  \vert D^{\pi}(h_t'\cdot a))
.\end{align*}
In the first line we have expanded the left hand side using
Equation~\ref{eq:freq-decomposition}. In the second line we've rewritten $\phi(h_t)$
as $\phi(h_t')$ and used the Markov Feature Assumption. In the third line we've used
Equation~\ref{eq:ia} and the fact that $\preceq_{\circ}$ is consistent. The fourth
line follows from Equation~\ref{eq:freq-decomposition}. These equations are
sufficient to show that (P) holds at time $t$, concluding our induction step.
\end{proof}

\end{document}